\newlength{\defbaselineskip}
\DeclareMathOperator*{\argmin}{arg\,min}
\DeclareMathOperator{\diag}{diag}
\DeclareMathOperator{\trace}{tr}
\DeclareMathOperator{\rank}{rank}
\newcommand{\dataset}[1]{\texttt{#1}}
\newcommand{\defeq}{\stackrel{\text{\tiny def}}{=}}
\newcommand{\order}{\mathcal{O}} 
\newcommand{\x}{\mathbf{x}}
\renewcommand{\xi}{{\x}_{i}}
\newcommand{\z}{\mathbf{z}}
\newcommand{\X}{\mathbf{X}}
\newcommand{\QQ}{\mathbf{Q}}
\newcommand{\RR}{\mathbf{R}}
\newcommand{\Y}{\mathbf{Y}}
\newcommand{\K}{\mathbf{K}}
\newcommand{\G}{\mathbf{G}}
\newcommand{\eye}{\mathbf{I}}
\newcommand{\R}{\mathbb{R}}    
\newcommand{\LL}{\mathbf{L}}
\newcommand{\SSS}{\mathbf{S}}
\newcommand{\Z}{\mathbf{Z}}
\newcommand{\Lam}{\boldsymbol{\Lambda}}
\newcommand{\CC}{\mathbf{C}}
\newcommand{\WW}{\mathbf{W}}
\newcommand{\EE}{\mathbf{E}}
\newcommand{\PP}{\mathbf{P}}
\newcommand{\UU}{\mathbf{U}}
\newcommand{\VV}{\mathbf{V}}
\newcommand{\SIGMA}{\boldsymbol{\Sigma}}
\newcommand{\HH}{\mathbf{H}}
\theoremstyle{plain}
\newtheorem{thm}{\protect\theoremname}
\theoremstyle{plain}
\newtheorem{lem}[thm]{\protect\lemmaname}
\providecommand{\lemmaname}{Lemma}
\providecommand{\theoremname}{Theorem}
\newtheorem{exa}[thm]{Example}
\newtheorem{remark}[thm]{Remark}
\begin{document}

\title{
Improved Fixed-Rank Nystr\"om Approximation via QR Decomposition: Practical and Theoretical Aspects \footnote{This paper is accepted for publication in  \href{https://doi.org/10.1016/j.neucom.2019.06.070}{Neurocomputing}.}
}

\author{Farhad Pourkamali-Anaraki\\Department of Computer Science, University of Massachusetts Lowell, MA, USA \and Stephen Becker\\Department of Applied Mathematics, University of Colorado Boulder, CO, USA}

\date{\vspace{-5ex}}


\maketitle

\begin{abstract}
	The Nystr\"om method is a popular technique that uses a small number of landmark points to compute a fixed-rank approximation of large kernel matrices that arise in machine learning problems. In practice, to ensure high quality approximations, the number of landmark points is chosen to be greater than the target rank. However, for simplicity the standard Nystr\"om method uses a sub-optimal procedure for rank reduction. In this paper, we examine the drawbacks of the standard Nystr\"om method in terms of poor performance and lack of theoretical guarantees. To address these issues, we present an efficient modification for generating improved fixed-rank Nystr\"om approximations. Theoretical analysis and numerical experiments are provided to demonstrate the advantages of the modified method over the standard Nystr\"om method. Overall, the aim of this paper is to convince researchers to use the modified method, as it has nearly identical computational complexity, is easy to code, has greatly improved accuracy in many cases, and is optimal in a sense that we make precise. 
\end{abstract}

\section{Introduction}
\label{sec:intro}
Kernel methods are widely used in various machine learning problems. Well-known examples include support vector machines \cite{VapnikSVM,suykens1999least}, kernel clustering \cite{girolami2002mercer,chitta2011approximate,pourkamali2016randomized,LANGONE2017}, and kernel ridge regression \cite{saunders1998ridge,hsieh2014fast,alaoui2015fast,yang2017randomized}. The main idea
behind kernel-based learning is to map the input data points into a feature space, where all pairwise inner products of the mapped data points can be computed via a nonlinear kernel function that satisfies  Mercer's condition \cite{LearningWithKernels}. Thus, kernel methods allow one to use linear algorithms in the feature space which correspond
to nonlinear algorithms in the original space. For this reason, kernel machines have received much attention as an effective tool to tackle problems with complex and nonlinear structures.

Let $\x_1,\ldots,\x_n$ be a set of $n$ data points in $\R^p$. The inner products in feature space are calculated using a nonlinear kernel function $\kappa(\cdot,\cdot)$:
\begin{equation} \label{eq:kernel}
K_{ij}\defeq\kappa(\x_i,\x_j)=\langle\Phi(\x_i),\Phi(\x_j)\rangle,\;\;\forall i,j\in\{1,\ldots,n\}, 
\end{equation}
where $\Phi:\x\mapsto\Phi(\x)$ is the kernel-induced feature map. A popular choice is the Gaussian kernel function $\kappa(\x_i,\x_j)=\exp(-\|\x_i-\x_j\|_2^2/c)$, with the parameter $c>0$. In kernel machines, the pairwise inner products are stored in the symmetric positive semidefinite (SPSD) kernel matrix $\K\in\R^{n\times n}$. However, it takes $\order(n^2)$ memory to store the full kernel matrix and subsequent processing of $\K$ within the learning process is quite expensive or prohibitive for large data sets.

A popular approach to tackle these challenges is to use the best rank-$r$ approximation $\llbracket\K\rrbracket_r=\UU_r\Lam_r\UU_r^T$, obtained via the eigenvalue decomposition of $\K$, for $r\leq\rank(\K)$. Here, the columns of $\UU_r\in\R^{n\times r}$ span the top $r$-dimensional eigenspace of $\K$, and the diagonal matrix $\Lam_r\in\R^{r\times r}$ contains the top $r$ eigenvalues. Since the kernel matrix is SPSD,  we have:
\begin{equation}
\K\approx\llbracket\K\rrbracket_r=\UU_r\Lam_r\UU_r^T=\LL\LL^T,\label{eq:low-rank-kernel}
\end{equation}
where $\LL\defeq\UU_r\Lam_r^{1/2}\in\R^{n\times r}$.

When the target rank $r$ is small and chosen independently of $n$ (e.g., $r$ is chosen according to the degrees of freedom in the learning problem \cite{BachKernelReview}), the benefits of the rank-$r$ approximation in \eqref{eq:low-rank-kernel} are twofold. First, it takes $\order(nr)$ to store the matrix $\LL$ which is only linear in the number of samples $n$. Second, the rank-$r$ approximation leads to substantial computational savings within the learning process.  For example, approximating $\K$ with $\LL\LL^T$ means the matrix inversion $\left(\K+\lambda\eye_{n\times n}\right)^{-1}$ in kernel ridge regression can be calculated using the Sherman-Morrison-Woodbury formula 
in $\order(nr^2+r^3)$ time compared to $\order(n^3)$ if done na\"ively.
Other examples are
kernel K-means clustering, which is performed on the columns of the matrix $\LL^T\in\R^{r\times n}$, and so each step of the K-means algorithm runs in time proportional to $r$.

Although it has been shown that the fixed-rank approximation of kernel matrices is a promising approach to trade-off accuracy for
scalability  \cite{cortes2010impact,LinearizedSVM,golts2016linearized,wang2016towards}, the eigenvalue decomposition of $\K$ has at least quadratic time complexity and takes $\order(n^2)$ space. To address this issue, one line of prior work is centered around efficient techniques for approximating the best rank-$r$ approximation when we have ready access to $\K$; see \cite{Martinson_SVD,FarhadPreconditioned,tropp2017practical} for a survey. 
However, $\K$ is typically
unknown in kernel methods and the cost to form $\K$ using standard kernel functions is $\order(pn^2)$, which is extremely expensive for large high-dimensional data sets. For this reason, the Nystr\"om method \cite{Nystrom2001} has been a popular technique for computing fixed-rank approximations, which eliminates the need to access every entry of the full kernel matrix. The Nystr\"om method works by selecting a small set of vectors, referred to as landmark points, and computes the kernel similarities between the input data points and landmark points. 

To be formal, the standard Nystr\"om method generates a rank-$r$ approximation of $\K$ using $m$ landmark points $\z_1,\ldots,\z_m$ in $\R^p$. In practice, it is common to choose $m$ greater than $r$ for obtaining higher quality rank-$r$ approximations \cite{kumar2012sampling,li2015large}, since the accuracy of the Nystr\"om method depends on the number of selected landmark points and the selection procedure.
The landmark points can be sampled with respect to a uniform or nonuniform distribution from the set of $n$ input data points \cite{gittens2016revisiting,musco2017recursive}. Moreover, some recent techniques utilize out-of-sample landmark points for generating improved Nystr\"om approximations, e.g., centroids found from K-means clustering on the input data points \cite{zhang2008improved,zhang2010clusteredNys,KernelKmeansNystrom,FarhadAAAI}. For a fixed set of landmark points, let $\CC\in\R^{n\times m}$ and $\WW\in\R^{m\times m}$ be two matrices with the $(i,j)$-th entries $C_{ij}=\kappa(\x_i,\z_j)$ and $W_{ij}=\kappa(\z_i,\z_j)$. Then, the rank-$m$ Nystr\"om approximation has the form $\G=\CC\WW^\dagger\CC^T$, where $\WW^\dagger$ is the pseudo-inverse of $\WW$. For the fixed-rank case, the standard Nystr\"om method restricts the rank of the $m\times m$ inner matrix $\WW$ and computes its best rank-$r$ approximation $\llbracket\WW\rrbracket_r$ to obtain $\G_{(r)}^{nys}=\CC\llbracket\WW\rrbracket_r^\dagger\CC^T$, which has rank no great than $r$.
The manner of the  rank-$m$ to rank-$r$ reduction may appear {\em ad hoc}, and {\em improving this reduction is the topic of the paper.}

Although the rank reduction process in the standard Nystr\"om method is simple, 
it disregards the structure of $\CC$.
This method generates the rank-$r$ approximation $\G_{(r)}^{nys}$ solely based on filtering $\WW$ because of its smaller size compared to the matrix $\CC$ of size $n\times m$. As a result, {\em the selection of more distinct landmark points in the standard Nystr\"om method does not guarantee improved rank-$r$ approximations of kernel matrices}. For example, our experimental results in Section \ref{sec:exper} reveal that the increase in the number of landmark points may even produce less accurate rank-$r$ approximations due to the poor rank reduction process, cf.~Remark \ref{rmk:1} and Remark \ref{rmk:2}.

This paper considers the fundamental problem of rank reduction in the Nystr\"om method. In particular, we present an efficient technique for computing a rank-$r$ approximation in the form of $\G_{(r)}^{opt}=\llbracket\CC\WW^\dagger\CC^T\rrbracket_r$, which runs in time comparable with the standard Nystr\"om method. The modified method utilizes the thin QR decomposition of the matrix $\CC$ for computing a more accurate rank-$r$ approximation of $\CC\WW^\dagger\CC^T$ compared to $\G_{(r)}^{nys}$. Moreover, unlike the standard Nystr\"om method, our results show that both theoretically and empirically, modified Nystr\"om produces more accurate rank-$r$ approximations as the number of distinct landmark points increases. 
\subsection{Contributions}
In this work, we make the following contributions:
\begin{enumerate}
	\item In Algorithm \ref{alg:NysQR}, we present an efficient method for generating improved rank-$r$ Nystr\"om approximations. The modified method computes the best rank-$r$ approximation of $\CC\WW^\dagger\CC^T$, i.e., $\G_{(r)}^{opt}=\llbracket\CC\WW^\dagger\CC^T\rrbracket_r$, in linear time with respect to the sample size $n$. In Theorem \ref{thm:nys-qr-sta}, it is shown that $\G_{(r)}^{opt}$ always produces a more accurate rank-$r$ approximation of $\K$ compared to $\G_{(r)}^{nys}$ with respect to the trace norm, when $m$ is greater than  $r$ and landmark points are selected from the input data set. Remark
	\ref{thm:remark-frob} shows this is not necessarily true in the Frobenius norm, although it is rarely seen in practice.
	\item Theorem \ref{thm:nys-qr-std-more} proves that the accuracy of the modified rank-$r$ Nystr\"om approximation always improves (with respect to the trace norm) as
	more distinct landmark points are selected from the input data set. 
	\item We provide counter-examples in Remark  \ref{rmk:1} and Remark \ref{rmk:2} showing that an equivalent of Theorem \ref{thm:nys-qr-std-more} cannot hold for the standard Nystr\"om method. Example \ref{example1} shows a situation where the modified Nystr\"om method is arbitrarily better than the standard method, with respect to the trace and Frobenius norms. Remark \ref{rmk:same} gives insight into when we expect the standard and modified methods to differ. 
	\item Theorem \ref{thm:out-of-sample} shows that, under certain conditions, our theoretical results are also applicable to more recent selection techniques based on out-of-sample extensions of the input data, such as centroids found from K-means clustering.
	\item Finally, we provide experimental results to demonstrate the superior performance and advantages of modified Nystr\"om.
\end{enumerate}

To our knowledge, the modified Nystr\"om method was not discussed in the literature until our preliminary preprint \cite{pourkamali2016rand}, though its derivation is straightforward and so we suspect it may have been previously derived in unpublished work; our main contribution is the mathematical analysis. Due to the importance of rank reduction in the Nystr\"om method, there are two recent works \cite{wang2017scalable,tropp2017fixed} that independently study the approximation error of $\llbracket\CC\WW^\dagger\CC^T\rrbracket_r$, when landmark points are selected from the input data set. However, there are two principal differences between this work and the aforementioned references. First, the main focus of this paper is to directly compare the standard and modified Nystr\"om methods, and provide both theoretical and experimental evidences on the effectiveness of modified Nystr\"om, while \cite{wang2017scalable,tropp2017fixed} do not provide results comparing the two methods.
Second, we present theoretical results for the important class of out-of-sample landmark points, which often lead to accurate Nystr\"om approximations.

\subsection{Paper Organization}
In  Section \ref{sec:notation}, we present the notation and give a brief review of some matrix decomposition and low-rank approximation techniques. Section \ref{sec:standard-nys} reviews the standard Nystr\"om method for computing  rank-$r$ approximations and we explain the process of obtaining approximate eigenvalues and eigenvectors. In  Section \ref{sec:improved-nys}, we present an efficient modified method for computing improved rank-$r$ approximations of kernel matrices. The main theoretical results are given in Section \ref{sec:theory} and Section \ref{sec:thm-outofsample}, and we present experimental results comparing the modified and standard Nystr\"om methods in Section \ref{sec:exper}. Section \ref{sec:conclusion} provides a brief conclusion.

\section{Notation and Preliminaries}\label{sec:notation}
We denote column vectors with lower-case bold letters and matrices with upper-case bold letters. $\eye_{n\times n}$ is the identity matrix of size $n\times n$; $\mathbf{0}_{n\times m}$ is the $n\times m$ matrix of zeros. For a vector $\x\in\R^p$, let $\|\x\|_2$ denote the  Euclidean norm, and $\diag(\x)$ represents a diagonal matrix with the elements of $\x$ on the main diagonal. The $(i,j)$-th entry of $\mathbf{A}$ is denoted by $A_{ij}$, $\mathbf{A}^T$ is the transpose of $\mathbf{A}$, and $\trace(\cdot)$ is the trace operator.
We assume scalars, vectors and matrices are real-valued, though many of the results extend to complex numbers.

Each $n\times m$ matrix $\mathbf{A}$ with $\rho=\rank(\mathbf{A})\leq \min\{n,m\}$ admits a factorization in the form of $\mathbf{A}=\UU\SIGMA\VV^T$, where $\UU\in\R^{n\times \rho}$ and $\VV\in\R^{m\times \rho}$ are orthonormal matrices known as the left singular vectors and right singular vectors, respectively. The diagonal matrix $\SIGMA=\diag([\sigma_1(\mathbf{A}),\ldots,\sigma_\rho(\mathbf{A})])$ contains the singular values of $\mathbf{A}$ in descending order, i.e., $\sigma_1(\mathbf{A})\geq\ldots\geq\sigma_\rho(\mathbf{A})>0$. 
This factorization is known as the \emph{thin} singular value decomposition (SVD).

Throughout the paper, we use several standard matrix norms. The Frobenius norm of $\mathbf{A}$ is defined as
$
\|\mathbf{A}\|_F^2\defeq\sum_{i=1}^{\rho}\sigma_i(\mathbf{A})^2=\trace(\mathbf{A}^T\mathbf{A})
$ 
and $\|\mathbf{A}\|_*\defeq\sum_{i=1}^{\rho}\sigma_i(\mathbf{A})=\trace(\sqrt{\mathbf{A}^T\mathbf{A}})$ denotes the trace norm (or nuclear norm) of $\mathbf{A}$. The spectral  norm of $\mathbf{A}$ is the largest singular value of $\mathbf{A}$, i.e., $\|\mathbf{A}\|_2\defeq\sigma_1(\mathbf{A})$.  
It is straightforward to show that $\|\mathbf{A}\|_2\leq \|\mathbf{A}\|_F\leq \|\mathbf{A}\|_*$. 

When $\K$ is a \emph{kernel matrix}, meaning it is generated via \eqref{eq:kernel}, we assume the kernel function $\kappa$ satisfies Mercer's condition and therefore $\K$ is symmetric positive semidefinite (SPSD)~\cite{aronszajn1950theory,LearningWithKernels}. 
Let $\K\in\R^{n\times n}$ be any SPSD matrix with
$\rho=\text{rank}(\K)\leq n$.
Similar to the SVD, the matrix $\K$ can be factorized as $\K=\UU\Lam\UU^T$, where $\UU\in\R^{n\times \rho}$ contains the orthonormal eigenvectors, i.e., $\UU^T\UU=\eye_{\rho\times \rho}$, and  $\Lam=\diag\left([\lambda_1(\K),\ldots,\lambda_\rho(\K)]\right)$ is a diagonal matrix which contains the nonzero eigenvalues of $\K$ in descending order. This factorization is known as the \emph{thin} eigenvalue decomposition (EVD). The matrices $\UU$ and $\Lam$ can be partitioned for a target rank $r$ ($r\leq\rho$) in the form of $\K=\UU_r\Lam_r\UU_r^T+\UU_{\rho-r}\Lam_{\rho-r}\UU_{\rho-r}^T$,
where $\Lam_r\in\R^{r\times r}$ contains the $r$ leading eigenvalues and the columns of $\UU_r\in\R^{n\times r}$ span the top $r$-dimensional eigenspace, and $\Lam_{\rho-r}\in\R^{(\rho-r)\times (\rho-r)}$ and $\UU_{\rho-r}\in\R^{n\times (\rho - r)}$ contain the remaining $(\rho-r)$ eigenvalues and eigenvectors. 
It is well-known that $\llbracket\K\rrbracket_r\defeq\UU_r\Lam_r\UU_r^T$ is the ``best rank-$r$ approximation'' to $\K$ in the sense that $\llbracket\K\rrbracket_r$ minimizes $\|\K-\K'\|_F$ and $\|\K-\K'\|_*$ over all matrices $\K'\in\R^{n\times n}$ of rank at most $r$. If $\lambda_r(\K) = \lambda_{r+1}(\K)$, then $\llbracket\K\rrbracket_r$ is not unique. The Moore-Penrose pseudo-inverse of $\K$ can be obtained from the EVD as $\K^\dagger=\UU\Lam^{-1}\UU^T$. When $\K$ is full rank, we have $\K^\dagger=\K^{-1}$.

Another matrix factorization technique that we use is the QR decomposition. An $n\times m$ matrix $\mathbf{A}$, with $n\geq m$, can be decomposed as $\mathbf{A}=\QQ\RR$,
where $\QQ\in\R^{n\times m}$ has $m$ orthonormal columns, i.e., $\QQ^T\QQ=\eye_{m\times m}$, and $\RR\in\R^{m\times m}$ is an upper triangular matrix.
Sometimes this is called the \emph{thin} QR decomposition, to distinguish it from a \emph{full} QR decomposition which finds $\QQ\in\R^{n\times n}$ and zero-pads $\RR$ accordingly.

Finally, we state a standard result on the rank-$r$ approximation of a matrix  expressed as a product of two matrices. The proof of this result can be found in \cite{boutsidis2014near}.
\begin{lem}\label{lemma:best-rank-ONB} Consider the matrix  $\K\in\R^{n\times n}$ and let $\QQ\in\R^{n\times m}$ be a matrix that has $m<n$ orthonormal columns. For any positive integer $r\leq m$, we have:
	\begin{equation}
	\llbracket\QQ^T\K\rrbracket_r = \argmin_{\mathbf{T}:\; \rank(\mathbf{T})\leq r}\| \K - \QQ\mathbf{T}\|_F^2.
	\end{equation}
\end{lem}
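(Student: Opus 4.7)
The plan is to reduce the constrained low-rank approximation problem $\min_{\rank(\mathbf T)\le r}\|\K-\QQ\mathbf T\|_F^2$ to an unconstrained best rank-$r$ approximation problem of the smaller matrix $\QQ^T\K$, and then invoke the Eckart--Young theorem (already stated in the excerpt for the Frobenius norm).

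The key step is an orthogonal completion. Since $\QQ\in\R^{n\times m}$ has $m<n$ orthonormal columns, I would choose $\QQ_\perp\in\R^{n\times(n-m)}$ whose columns complete $\QQ$'s columns into an orthonormal basis of $\R^n$, so that $[\QQ,\QQ_\perp]\in\R^{n\times n}$ is orthogonal. Multiplication by an orthogonal matrix preserves the Frobenius norm, hence for any $\mathbf T\in\R^{m\times n}$:
\begin{align*}
\|\K-\QQ\mathbf T\|_F^2 &= \left\|[\QQ,\QQ_\perp]^T(\K-\QQ\mathbf T)\right\|_F^2\\
&= \left\|\begin{bmatrix}\QQ^T\K-\mathbf T\\ \QQ_\perp^T\K\end{bmatrix}\right\|_F^2\\
&= \|\QQ^T\K-\mathbf T\|_F^2 + \|\QQ_\perp^T\K\|_F^2,
\end{align*}
using $\QQ^T\QQ=\eye_{m\times m}$ and $\QQ_\perp^T\QQ=\mathbf 0$. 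The second term is independent of $\mathbf T$.

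Therefore minimizing $\|\K-\QQ\mathbf T\|_F^2$ over matrices $\mathbf T$ of rank at most $r$ is equivalent to minimizing $\|\QQ^T\K-\mathbf T\|_F^2$ over the same set. By the Eckart--Young theorem (recalled in the paragraph preceding the lemma), the minimizer is the best rank-$r$ approximation $\llbracket\QQ^T\K\rrbracket_r$ of the $m\times n$ matrix $\QQ^T\K$, which yields the claim.

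I do not anticipate a serious obstacle: the only subtlety worth mentioning is that $\llbracket\QQ^T\K\rrbracket_r$ is not unique when $\sigma_r(\QQ^T\K)=\sigma_{r+1}(\QQ^T\K)$, but the identity then holds for any such minimizer, consistent with the convention adopted earlier in the paper. One could alternatively bypass the orthogonal completion by writing $\K=\QQ\QQ^T\K+(\eye-\QQ\QQ^T)\K$ and observing that the two summands are Frobenius-orthogonal because $(\eye-\QQ\QQ^T)$ is the orthogonal projector onto $\text{range}(\QQ)^\perp$; this gives the same Pythagorean decomposition without introducing $\QQ_\perp$ explicitly, and may be the cleaner route to present.
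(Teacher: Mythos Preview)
Your argument is correct. Note, however, that the paper does not actually prove this lemma: it simply cites \cite[Lemma~8]{boutsidis2014near} and states the result. Your Pythagorean decomposition via an orthogonal completion of $\QQ$ (or equivalently via the projector $\eye-\QQ\QQ^T$) followed by Eckart--Young is precisely the standard proof of this fact, and is what one finds in the cited reference, so there is nothing to compare.
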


\section{The Standard Nystr\"om Method}\label{sec:standard-nys}
The Nystr\"om method generates a fixed-rank approximation of the SPSD kernel matrix $\K\in\R^{n\times n}$  by selecting a small set of vectors referred to as ``landmark points''. The simplest selection technique is uniform sampling without replacement \cite{Nystrom2001,kumar2012sampling}, where each data point is sampled with the same probability, i.e., $p_i=\frac{1}{n}$, for $i=1,\ldots,n$. The advantage of this technique is the low computational complexity associated with sampling landmark points. However, uniform sampling does not take into account the nonuniform structure of many data sets and the resulting kernel matrices. Therefore, sampling mechanisms with respect to nonuniform  distributions have been proposed to address this problem. This line of work requires the computation of statistical leverage scores of $\K$, which is more expensive than uniform sampling \cite{Nystrom_Kernel_Approx,mahoney2009cur,FastApproxCohLev}. In addition, leverage score sampling often requires computing the entire kernel matrix $\K$, which negates one of the principal benefits of the Nystr\"om method.  A comprehensive review and comparison of uniform and nonuniform landmark selection techniques can be found in \cite{kumar2012sampling,sun2015review}.  

More recently, generating landmark points using out-of-sample extensions of input data has been shown to be effective for high quality Nystr\"om approximations. This line of research originates from the work of Zhang et al.~\cite{zhang2008improved,zhang2010clusteredNys}, and it is based on the observation that the Nystr\"om approximation error depends on the quantization error of encoding the data set with the landmark points. Hence, the landmark points are selected to be the centroids found from K-means clustering. In machine learning and pattern recognition, K-means clustering is a well-established technique to partition a data set into clusters 
by trying to minimize the total sum of the squared Euclidean distances of each point to the closest cluster center \cite{Bishop}. 

In general, assume that a set of $m\ll n$ landmark points in $\R^p$, denoted by  $\Z=[\z_1,\ldots,\z_m]\in\R^{p\times m}$, are given. Let us consider two matrices $\CC\in\R^{n\times m}$ and $\WW\in\R^{m\times m}$, where $C_{ij}=\kappa(\x_i,\z_j)$ and $W_{ij}=\kappa(\z_i,\z_j)$.
The Nystr\"om method uses both $\CC$ and $\WW$ to construct an
approximation of the kernel matrix $\K$ in the form of $\K\approx\G=\CC\WW^\dagger\CC^T$, which has rank at most $m$.
For the fixed-rank case, the Nystr\"om method generates a rank-$r$ approximation of the kernel matrix, $r\leq m$, by computing the best rank-$r$ approximation of the inner matrix $\WW$ \cite{li2015large,gittens2016revisiting,lu2016large}, which results in 
$\G_{(r)}^{nys}=\CC\llbracket\WW\rrbracket_r^\dagger\CC^T$,
where $\llbracket\WW\rrbracket_r^\dagger$ represents the pseudo-inverse of $\llbracket\WW\rrbracket_r$. Thus, the EVD of the matrix $\WW=\VV\SIGMA\VV^T$ should be computed to find the top $r$ eigenvalues $\SIGMA_r\in\R^{r\times r}$  and corresponding eigenvectors $\VV_r\in\R^{m\times r}$:
\begin{equation}
\G_{(r)}^{nys}=\LL^{nys}\left(\LL^{nys}\right)^T,\; \LL^{nys}=\CC\VV_r\Big(\SIGMA_r^{-1}\Big)^{1/2}.\label{eq:Nys-low-rank-ll}
\end{equation}
The time complexity of the Nystr\"om
method to form $\LL^{nys}$ is $\order(pnm+m^2r+nmr)$, where it takes $\order(pnm)$ to construct $\CC$ and $\WW$. 
It takes $\order(m^2r)$ time to perform the partial EVD of $\WW$ and $\order(nmr)$ represents the cost of the matrix multiplication $\CC\VV_r$. Thus, for $r\leq m\ll n$, the computation cost to form the rank-$r$ approximation of the kernel matrix is only linear in the data set size $n$. 
The eigenvalues and eigenvectors of $\K$ can be estimated by using the rank-$r$ approximation in \eqref{eq:Nys-low-rank-ll},
and in fact this approach provides the exact eigenvalue decomposition of $\G_{(r)}^{nys}$. 
The first step is to find the EVD of the $r\times r$ matrix: $(\LL^{nys})^T\LL^{nys}=\widetilde{\VV}\widetilde{\SIGMA}\widetilde{\VV}^T$, where $\widetilde{\VV},\widetilde{\SIGMA}\in\R^{r\times r}$. Then, the estimates of $r$ leading eigenvalues and eigenvectors of $\K$ are obtained as: $\widehat{\UU}_{r}^{nys}=\LL^{nys}\widetilde{\VV}\big(\widetilde{\SIGMA}^{-1}\big)^{1/2}$ and $\widehat{\boldsymbol{\Lambda}}_r^{nys}=\widetilde{\SIGMA}$. The procedure to estimate the $r$ leading eigenvalues/eigenvectors is summarized in Algorithm \ref{alg:StandardNys}. 

\begin{algorithm}[t]
	\caption{Standard Nystr\"om}
	\label{alg:StandardNys}
	\textbf{Input:} data set $\X$, $m$ landmark points $\Z$, kernel function $\kappa$, target rank $r$
	
	\textbf{Output:} estimates of $r$ leading eigenvectors and eigenvalues of the kernel matrix $\K\in\R^{n\times n}$: $\widehat{\UU}_r^{nys}\in\R^{n\times r}$, $\widehat{\Lam}_r^{nys}\in\R^{r\times r}$
	\begin{algorithmic}[1]
		\STATE  Form $\CC$ and $\WW$: $C_{ij}=\kappa(\x_i,\z_j)$, $W_{ij}=\kappa(\z_i,\z_j)$
		\STATE Compute EVD: $\WW=\VV\SIGMA\VV^T$
		\STATE Form the matrix: $\LL^{nys}=\CC\VV_r\big(\SIGMA_r^{-1}\big)^{1/2}$ 
		\STATE Compute EVD: $(\LL^{nys})^T\LL^{nys}=\widetilde{\VV}\widetilde{\SIGMA}\widetilde{\VV}^T$
		\STATE $\widehat{\UU}_{r}^{nys}=\LL^{nys}\widetilde{\VV}\Big(\widetilde{\SIGMA}^{-1}\Big)^{1/2}$ and $\widehat{\boldsymbol{\Lambda}}_r^{nys}=\widetilde{\SIGMA}$
	\end{algorithmic}
\end{algorithm} 

\section{Improved Nystr\"om Approximation via QR Decomposition}\label{sec:improved-nys}
In the previous section, we explained the Nystr\"om method for computing rank-$r$ approximations of SPSD kernel matrices based on selecting a small set of landmark points. 
Although the final goal is to find an approximation that has rank no greater than $r$,  it is often preferred to select $m>r$ landmark points and then restrict the resultant approximation to have rank at most $r$. The main intuition is that selecting $m>r$ landmark points and then restricting the approximation to a lower rank-$r$ space has a regularization effect which can lead to more accurate approximations \cite{kumar2012sampling,gittens2016revisiting}. For example, when landmark points are chosen to be centroids from K-means clustering, more landmark points lead to smaller quantization error of the data set, and thus higher quality Nystr\"om approximations. 

In the standard Nystr\"om method presented in Algorithm \ref{alg:StandardNys}, the rank of the matrix $\G$ is restricted by computing the best rank-$r$ approximation of the inner matrix $\WW$: $\G_{(r)}^{nys}=\CC\llbracket\WW\rrbracket_r^\dagger\CC^T$. Since the inner matrix in the representation of $\G_{(r)}^{nys}$ has rank no greater than $r$, it follows that $\G_{(r)}^{nys}$ has rank at most $r$. The main benefit of this technique is the low computational cost of performing an exact eigenvalue decomposition on a relatively small matrix of size $m\times m$. However, the standard Nystr\"om method ignores the structure of the matrix $\CC$ in the rank reduction process. In fact, since the rank-$r$ approximation $\G_{(r)}^{nys}$ does not utilize the full knowledge of $\CC$, the selection of more landmark points does not guarantee an improved low-rank approximation in the standard Nystr\"om method, cf.~Remark \ref{rmk:1} and Remark \ref{rmk:2}.

To solve this problem, we present an efficient method to compute the best rank-$r$ approximation of $\G=\CC\WW^\dagger\CC^T$, for given matrices $\CC\in\R^{n\times m}$ and $\WW\in\R^{m\times m}$. In contrast with the standard Nystr\"om method, the modified approach takes advantage of both matrices $\CC$ and $\WW$. To begin, let us consider the best
rank-$r$ approximation of $\G$ in any unitarily invariant norm $\|\cdot\|$, such as the Frobenius norm or trace norm: 
\begin{align}
\G_{(r)}^{opt}&\defeq \argmin_{\G':\;\text{rank}(\G')\leq r} \|\CC\WW^\dagger\CC^T-\G'\| 
\overset{(i)}{=}   \argmin_{\G':\;\text{rank}(\G')\leq r} \|\QQ\underbrace{\RR\WW^\dagger\RR^T}_{m\times m}\QQ^T-\G'\|\nonumber\\
&\overset{(ii)}{=}  \argmin_{\G':\;\text{rank}(\G')\leq r} \|\left(\QQ\VV'\right)\SIGMA'\left(\QQ\VV'\right)^T-\G'\| =  \left(\QQ \VV'_r\right) \SIGMA'_r\left(\QQ \VV'_r\right)^T,\label{eq:optimal-nys}
\end{align}
where (i) follows from the QR decomposition of $\CC\in\R^{n\times m}$; $\CC=\QQ\RR$, where $\QQ\in\R^{n\times m}$ and $\RR\in\R^{m\times m}$. To get (ii), the EVD of the $m\times m$ matrix $\RR\WW^\dagger\RR^T$ is computed: $\RR\WW^\dagger\RR^T=\VV'\SIGMA'\VV'^T$, where the diagonal matrix $\SIGMA'\in\R^{m\times m}$ contains $m$ eigenvalues in descending order on the main diagonal and the columns of $\VV'\in\R^{m\times m}$ are the corresponding eigenvectors. Moreover, we note that the columns of  $\QQ\VV'\in\R^{n\times m}$ are orthonormal because both $\QQ$ and $\VV'$ have orthonormal columns. 
Thus, the decomposition $(\QQ\VV')\SIGMA'(\QQ\VV')^T$ contains the $m$ eigenvalues and orthonormal eigenvectors of the Nystr\"om approximation $\CC\WW^\dagger\CC^T$. Hence, the best rank-$r$ approximation of $\G=\CC\WW^\dagger\CC^T$ is then computed using the $r$ leading eigenvalues $\SIGMA'_r\in\R^{r\times r}$ and corresponding eigenvectors $\QQ\VV'_r\in\R^{n\times r}$, as given \eqref{eq:optimal-nys}. Thus, the estimates of the top $r$ eigenvalues and eigenvectors of the kernel matrix $\K$ from the Nystr\"om approximation $\CC\WW^\dagger\CC^T$ are obtained as: $\widehat{\UU}_r^{opt}=\QQ\VV'_r$ and $\widehat{\Lam}_r^{opt}=\SIGMA'_r$.  

The modified method for estimating the $r$ leading eigenvalues/eigenvectors of the kernel matrix is presented in Algorithm \ref{alg:NysQR}. The time complexity is $\order(pnm+nm^2+m^3+nmr)$, where $\order(pnm)$ represents the cost to form $\CC$ and $\WW$.
The complexity of the QR decomposition is $\order(nm^2)$ and it takes $\order(m^3)$ time to compute the EVD of $\RR\WW^\dagger\RR^T$. Finally, the cost to compute the matrix multiplication $\QQ\VV'_r$ is $\order(nmr)$.
\begin{algorithm}[t]
	\caption{Nystr\"om via QR Decomposition (``modified'' Nystr\"om)}
	\label{alg:NysQR}
	\textbf{Input:} data set $\X$, $m$ landmark points $\Z$, kernel function $\kappa$, target rank $r$
	
	\textbf{Output:} estimates of $r$ leading eigenvectors and eigenvalues of the kernel matrix $\K\in\R^{n\times n}$: $\widehat{\UU}_r^{opt}\in\R^{n\times r}$, $\widehat{\Lam}_r^{opt}\in\R^{r\times r}$
	\begin{algorithmic}[1]
		\STATE 	  Form $\CC$ and $\WW$: $C_{ij}=\kappa(\x_i,\z_j)$, $W_{ij}=\kappa(\z_i,\z_j)$
		\STATE Perform the thin QR decomposition: $\CC=\QQ\RR$
		\STATE Compute EVD: $\RR\WW^\dagger\RR^T=\VV'\SIGMA'\VV'^T$
		\STATE $\widehat{\UU}_r^{opt}=\QQ\VV'_r$ and $\widehat{\Lam}_r^{opt}=\SIGMA'_r$
	\end{algorithmic}
\end{algorithm}

We can compare the computational complexity of Nystr\"om via QR decomposition with that of the standard Nystr\"om method. Since our focus in this paper is on large-scale data sets with $n$ large, we only consider terms involving $n$ which lead to dominant computation costs. Based on our previous discussion, it takes $\mathcal{C}_{nys}=\order(pnm+nmr+nr^2)$ time to compute the eigenvalue decomposition using the standard Nystr\"om method, while the cost of the modified technique is $\mathcal{C}_{opt}=\order(pnm+nmr+nm^2)$. Thus, for data of even moderate dimension with $p\gtrsim m$, the dominant term in both $\mathcal{C}_{nys}$ and $\mathcal{C}_{opt}$ is $\order(pnm)$.
Hence, there is no significant increase in cost, as is the case in our runtime example shown in Figure \ref{fig:runtime}.

\begin{exa} \label{example1}
	In the rest of this section, we present a simple example to gain some intuition on the superior performance of the modified technique.
\end{exa}
Let us consider a small kernel matrix of size $3\times 3$: 
\begin{equation}
\K=\begin{bmatrix}
1 & 0 & 10\\
0 & 1.01 & 0\\
10 & 0 & 100\end{bmatrix}.\label{eq:ex-kernel}
\end{equation}
One can find, for example, a data matrix $\X$ that generates this kernel matrix as $\K=\X^T\X$.
Here, the goal is to compute the rank $r=1$ approximation of $\K$. Sample  $m=2$ columns of $\K$, and suppose we choose the first and second columns:
\begin{equation}
\CC=\begin{bmatrix}
1 & 0\\
0 & 1.01\\
10 & 0
\end{bmatrix},\;\;\WW=\begin{bmatrix}
1 & 0\\
0 & 1.01\end{bmatrix}.
\end{equation}
In the standard Nystr\"om method, the best rank-$1$ approximation of the matrix $\WW$ is first computed. Then, the rank-$1$ approximation of $\K$ using standard Nystr\"om is:
\begin{equation}
\G_{(1)}^{nys}
= \CC\llbracket\WW\rrbracket_1^\dagger\CC^T=
\begin{bmatrix}
0 & 0 & 0\\
0 & 1.01 & 0\\
0 & 0 & 0
\end{bmatrix}.
\end{equation}
The normalized  approximation error in terms of the Frobenius norm and trace norm is large:  $\|\K-\G_{(1)}^{nys}\|_F/\|\K\|_F=0.99$ and $\|\K-\G_{(1)}^{nys}\|_*/\|\K\|_*=0.99$.
On the other hand,  the modified method first computes the QR decomposition of $\CC=\QQ\RR$:
\begin{equation}
\QQ=\begin{bmatrix}
\frac{1}{\sqrt{101}} & 0\\
0 & 1\\
\frac{10}{\sqrt{101}} & 0
\end{bmatrix},\;\;\RR=\begin{bmatrix}
\sqrt{101} & 0\\
0 & 1.01
\end{bmatrix}.
\end{equation}
Then, the product of three matrices $\RR\WW^\dagger\RR^T$ is computed to find its EVD:
\begin{equation}
\RR\WW^\dagger\RR^T=  \begin{bmatrix}
\sqrt{101} & 0\\
0 & 1.01
\end{bmatrix}
\begin{bmatrix}
1 & 0\\
0 & \frac{1}{1.01}
\end{bmatrix}
\begin{bmatrix}
\sqrt{101} & 0\\
0 & 1.01
\end{bmatrix}
=  \underbrace{\begin{bmatrix}
	1 & 0\\
	0 & 1
	\end{bmatrix}}_{\VV'}\underbrace{\begin{bmatrix}
	101 & 0\\
	0 & 1.01
	\end{bmatrix}}_{\SIGMA'}\underbrace{\begin{bmatrix}
	1 & 0\\
	0 & 1
	\end{bmatrix}}_{\VV'^T}.
\end{equation}
Finally, the rank-$1$ approximation of the kernel matrix in the modified method is:
\begin{equation}
\G_{(1)}^{opt}=\begin{bmatrix}
\frac{1}{\sqrt{101}} & 0\\
0 & 1\\
\frac{10}{\sqrt{101}} & 0
\end{bmatrix}\begin{bmatrix}
101 & 0\\
0 & 0
\end{bmatrix}
\begin{bmatrix}
\frac{1}{\sqrt{101}} & 0 & \frac{10}{\sqrt{101}}\\
0 & 1 & 0
\end{bmatrix}
=\begin{bmatrix}
1 & 0 & 10\\
0 & 0 & 0\\
10 & 0 & 100
\end{bmatrix},
\end{equation}
where $\|\K-\G_{(1)}^{opt}\|_F/\|\K\|_F=0.01$ and $\|\K-\G_{(1)}^{opt}\|_*/\|\K\|_*=0.01$. 
In fact, one can show that our approximation is the same as the best rank-$1$ approximation of $\K$, i.e., $\G_{(1)}^{opt}=\llbracket\K\rrbracket_1$. 
Furthermore, by taking $K_{22} \searrow 1$ in \eqref{eq:ex-kernel}, we can make the improvement of the modified method over the standard method arbitrarily large.

\section{Main Theoretical Results}\label{sec:theory}
\newcommand{\D}{\mathbf{D}}
\newcommand{\M}{\mathbf{F}}
\newcommand{\NN}{\mathbf{N}}

In order to compare the accuracy of modified Nystr\"om with standard Nystr\"om, we first provide an alternative formulation of these two methods. 
We assume 
the landmark points $\Z=[\z_1,\ldots,\z_m]$ are \emph{in-sample}, meaning they are selected in any fashion (deterministic or random) from among the set of input data points,
so that 
the matrix $\CC\in\R^{n\times m}$ contains $m$ columns of the kernel matrix $\K$. This column selection process can be viewed as forming a \emph{sampling} matrix $\PP\in\R^{n\times m}$ that has exactly one nonzero entry in each column, where its location corresponds to the index of the selected landmark point. Then, the matrix product $\CC=\K\PP\in\R^{n\times m}$ contains $m$ columns sampled from the kernel matrix $\K$ and $\WW=\PP^T\K\PP\in\R^{m\times m}$ is the intersection of the $m$ columns with the corresponding $m$ rows of $\K$. 

Let us define $\D\defeq\K^{1/2}\PP\in\R^{n\times m}$, which means that $\CC=\K\PP=\K^{1/2}\D$ and $\WW=\PP^T\K\PP=\D^T\D$.
Moreover, we consider the SVD of $\D=\M\SSS\NN^T$, where the columns of  $\M\in\R^{n\times m}$ are the left singular vectors of $\D$, and we have $\SSS,\NN\in\R^{m\times m}$. Thus, we get the EVD of the matrix $\WW=\D^T\D=\NN\SSS^2\NN^T$.
For simplicity of presentation, we assume $\D$ and $\WW$ have full rank, though the results still hold as long as they have rank greater than or equal to $r$.

The rank-$r$ approximation in standard Nystr\"om is  $\G_{(r)}^{nys}=\LL^{nys}(\LL^{nys})^T$:
\begin{equation}
\LL^{nys}=\CC \left(\llbracket\WW\rrbracket_r^\dagger\right)^{1/2}=  \left(\K^{1/2}\M\SSS\NN^T\right)\left(\NN_r\SSS_r^{-2}\NN_r^T\right)^{1/2}=\K^{1/2} \M_r\NN_r^T,
\end{equation}
where we have used $(\NN_r\SSS_r^{-2}\NN_r^T)^{1/2}=\NN_r\SSS_r^{-1}\NN_r^T$, and the following two properties:
\begin{equation}
\NN^T\NN_r=\begin{bmatrix}
\eye_{r\times r}\\
\mathbf{0}_{(m-r)\times r}
\end{bmatrix},\;\;\;\M\begin{bmatrix}
\NN_r^T\\
\mathbf{0}_{(m-r)\times m}
\end{bmatrix}=\M_r\NN_r^T.
\end{equation}
Since the columns of $\NN_r$ are orthonormal, i.e., $\NN_r^T\NN_r=\eye_{r\times r}$, the rank-$r$ approximation of the kernel matrix $\K$ in the standard Nystr\"om method is given by:
\begin{equation}
\G_{(r)}^{nys}=\LL^{nys}\left(\LL^{nys}\right)^T=\K^{1/2}\M_r\M_r^T\K^{1/2}.\label{eq:G-nys-alt}
\end{equation}

Next, we present an alternative formulation of the rank-$r$ approximation $\G_{(r)}^{opt}$ in terms of the left singular vectors of $\D$. The modified Nystr\"om method finds the best rank-$r$ approximation of $\CC\WW^\dagger\CC^T$, and observe that:
\begin{equation}
\CC\big(\WW^\dagger\big)^{1/2}=\left(\K^{1/2}\M\SSS\NN^T\right)\left(\NN\SSS^{-1}\NN^T\right)=\K^{1/2}\M\NN^T.
\end{equation}
Thus, we get $\CC\WW^\dagger\CC^T=\K^{1/2}\M\M^T\K^{1/2}$, and the best rank-$r$ approximation is:
\begin{equation}
\G_{(r)}^{opt}=\llbracket\CC\WW^\dagger\CC^T\rrbracket_r=\llbracket\K^{1/2}\M\rrbracket_r\M^T\M\llbracket\M^T\K^{1/2}\rrbracket_r,\label{eq:nys-opt-v1}
\end{equation}
where we used $\M^T\M=\eye_{m\times m}$. Based on \cite[Lemma 6]{wang2017scalable}, let $\HH\in\R^{n\times r}$ be the orthonormal bases of the rank-$r$ matrix $\M\llbracket\M^T\K^{1/2}\rrbracket_r\in\R^{n\times n}$. Then, we have $\M\llbracket\M^T\K^{1/2}\rrbracket_r=\HH\HH^T\K^{1/2}$, which allows us to simplify \eqref{eq:nys-opt-v1}:
\begin{equation}
\G_{(r)}^{opt}=\K^{1/2}\HH\HH^T\K^{1/2}.\label{eq:alt-G-our}
\end{equation}
In the following, we present a theorem which shows that the modified Nystr\"om method generates improved rank-$r$ approximation of $\K$ compared to standard Nystr\"om. 
\begin{thm}[Modified Nystr\"om is more accurate than standard Nystr\"om]\label{thm:nys-qr-sta} Let $\K\in\R^{n\times n}$ be an SPSD kernel matrix, and $r$ be the target rank. 
	Let $\PP$ be any $n \times m$ matrix,
	with $m\geq r$, such that $\CC=\K\PP\in\R^{n\times m}$ and $\WW=\PP^T\K\PP\in\R^{m\times m}$. Then, we have:
	\begin{equation}
	\|\K-\G_{(r)}^{opt}\|_*\leq \|\K-\G_{(r)}^{nys}\|_*,
	\end{equation}
	where the Nystr\"om method via QR decomposition generates $\G_{(r)}^{opt}=\llbracket\CC\WW^\dagger\CC^T\rrbracket_r$, and the standard Nystr\"om method produces $\G_{(r)}^{nys}=\CC\llbracket\WW\rrbracket_r^\dagger\CC^T$.
\end{thm}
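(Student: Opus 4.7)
The plan is to reduce the trace-norm comparison to a comparison of traces by exploiting the fact that both error matrices are SPSD, and then to invoke an eigenvalue interlacing argument on a principal submatrix.

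First, I would use the alternative formulations derived just before the theorem:
\begin{align}
\K-\G_{(r)}^{nys}&=\K^{1/2}(\eye-\M_r\M_r^T)\K^{1/2},\\
\K-\G_{(r)}^{opt}&=\K^{1/2}(\eye-\HH\HH^T)\K^{1/2}.
\end{align}
Because $\eye-\M_r\M_r^T$ and $\eye-\HH\HH^T$ are orthogonal projections, conjugating by the symmetric matrix $\K^{1/2}$ produces SPSD matrices. For an SPSD matrix the trace norm coincides with the trace, so
\begin{equation}
\|\K-\G_{(r)}^{nys}\|_*-\|\K-\G_{(r)}^{opt}\|_*=\trace(\G_{(r)}^{opt})-\trace(\G_{(r)}^{nys}),
\end{equation}
and it suffices to prove $\trace(\G_{(r)}^{opt})\geq\trace(\G_{(r)}^{nys})$.

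Next I would rewrite the two traces cleanly. Using the cyclic property and the orthonormality of the columns of $\M$ and $\HH$,
\begin{align}
\trace(\G_{(r)}^{nys})&=\trace(\M_r^T\K\M_r),\\
\trace(\G_{(r)}^{opt})&=\trace(\HH^T\K\HH)=\|\llbracket\M^T\K^{1/2}\rrbracket_r\|_F^2=\sum_{i=1}^r\sigma_i^2(\M^T\K^{1/2}).
\end{align}
The second line uses $\M\llbracket\M^T\K^{1/2}\rrbracket_r=\HH\HH^T\K^{1/2}$ from the derivation preceding the theorem, together with $\M^T\M=\eye$. Since $\M^T\K\M=(\M^T\K^{1/2})(\M^T\K^{1/2})^T$, the squared singular values $\sigma_i^2(\M^T\K^{1/2})$ are exactly the eigenvalues $\lambda_i(\M^T\K\M)$.

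The remaining step, and the crux of the argument, is to observe that $\M_r^T\K\M_r$ is the leading $r\times r$ principal submatrix of the SPSD matrix $\M^T\K\M\in\R^{m\times m}$. By Cauchy interlacing, $\lambda_i(\M_r^T\K\M_r)\leq\lambda_i(\M^T\K\M)$ for $i=1,\dots,r$; summing yields
\begin{equation}
\trace(\M_r^T\K\M_r)\leq\sum_{i=1}^r\lambda_i(\M^T\K\M)=\sum_{i=1}^r\sigma_i^2(\M^T\K^{1/2}),
\end{equation}
which is exactly $\trace(\G_{(r)}^{nys})\leq\trace(\G_{(r)}^{opt})$, finishing the proof. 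I expect the main subtlety to be the clean identification of $\M_r^T\K\M_r$ as a principal submatrix of $\M^T\K\M$ (rather than, say, an arbitrary compression), since it is this structural fact that unlocks interlacing; once that is in place, everything else is a short bookkeeping exercise with the alternative formulations already supplied in the excerpt.
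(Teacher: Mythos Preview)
Your proof is correct and takes a genuinely different route from the paper. Both arguments begin with the same reduction: the error matrices are SPSD, so the trace norm equals the trace, and everything boils down to comparing $\trace(\G_{(r)}^{nys})$ with $\trace(\G_{(r)}^{opt})$. Where they diverge is in the final inequality. The paper converts $\|\K-\G_{(r)}^{opt}\|_*$ into the variational form $\min_{\rank(\mathbf{T})\le r}\|\K^{1/2}-\M\mathbf{T}\|_F^2$ via Lemma~\ref{lemma:best-rank-ONB}, and then simply plugs in the particular rank-$r$ matrix $\mathbf{T}'$ corresponding to the standard Nystr\"om approximation; the inequality is then immediate from optimality. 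You instead identify $\M_r^T\K\M_r$ as the leading $r\times r$ principal submatrix of $\M^T\K\M$ and invoke Cauchy interlacing to get $\sum_{i=1}^r\lambda_i(\M_r^T\K\M_r)\le\sum_{i=1}^r\lambda_i(\M^T\K\M)$. Your approach is arguably more elementary, avoiding the external lemma in favor of a classical eigenvalue inequality; the paper's approach, on the other hand, makes the optimality of $\G_{(r)}^{opt}$ within the ``$\M$-restricted'' class explicit, and this variational formulation is exactly what is reused verbatim in the proof of Theorem~\ref{thm:nys-qr-std-more}. Both arguments are short and essentially equivalent in strength; yours is self-contained, while the paper's sets up machinery that pays off again later.
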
 
\begin{proof}
	We start with the alternative formulation of $\G_{(r)}^{opt}=\K^{1/2}\HH\HH^T\K^{1/2}$, where the columns of $\HH\in\R^{n\times r}$ are orthonormal. Note that $\K-\G_{(r)}^{opt}$ is an SPSD matrix, and thus its trace norm is equal to the trace of this matrix:
	\begin{align}
	\|\K-\G_{(r)}^{opt}\|_*&=\trace\left(\K^{1/2}\big(\eye_{n\times n}- \HH \HH^T\big)\K^{1/2}\right)\nonumber\\
	& \overset{(i)}{=}\trace\left(\K^{1/2}\big(\eye_{n\times n}- \HH \HH^T\big)^2\K^{1/2}\right) \nonumber\\
	& =\|\big(\eye_{n\times n}-\HH\HH^T\big)\K^{1/2}\|_F^2\nonumber \\
	&\overset{(ii)}{=}   \|\K^{1/2} - \M\llbracket\M^T\K^{1/2}\rrbracket_r\|_F^2
	\overset{(iii)}{=}  \min_{\mathbf{T}:\;\rank(\mathbf{T})\leq r} \|\K^{1/2} - \M\mathbf{T}\|_F^2,\label{eq:thm-1-proof}
	\end{align}  
	where (i) follows from $(\eye_{n\times n}- \HH \HH^T\big)^2=(\eye_{n\times n}- \HH \HH^T\big)$, (ii) is based on the observation $\HH\HH^T\K^{1/2}=\M\llbracket\M^T\K^{1/2}\rrbracket_r$, and (iii) is based on Lemma \ref{lemma:best-rank-ONB}. Let us define the function $f(\mathbf{T})\defeq\|\K^{1/2} - \M\mathbf{T}\|_F^2$, and consider the matrix $\mathbf{T}'\in\R^{m\times n}$ with rank no greater than $r$: $\mathbf{T}'=\bigl[ \begin{smallmatrix}\M_r^T\\
	\mathbf{0}_{(m-r)\times n}\end{smallmatrix}\bigr]\K^{1/2}$.
	Then, we see that:
	\begin{align}
	\|\K-\G_{(r)}^{opt}\|_* & \leq  f(\mathbf{T'}) \nonumber\\
	&=  \|\K^{1/2} - \M_r\M_r^T\K^{1/2}\|_F^2\nonumber\\
	&=  \trace\left(\K^{1/2}\big(\eye_{n\times n}- \M_r \M_r^T\big)^2\K^{1/2}\right)  =   \|\K-\G_{(r)}^{nys}\|_*,
	\end{align}
	where we used $\M\mathbf{T}'=\M_r\M_r^T\K^{1/2}$, $(\eye_{n\times n}- \M_r \M_r^T\big)^2=(\eye_{n\times n}- \M_r \M_r^T\big)$, the alternative formulation of $\G_{(r)}^{nys}=\K^{1/2}\M_r\M_r^T\K^{1/2}$, cf.~\eqref{eq:G-nys-alt}, and that $\K-\G_{(r)}^{nys}$ is SPSD. 
\end{proof}

In Example \ref{example1}, we showed that the modified method outperforms the standard Nystr\"om method. The essential structural feature of the example was the presence of a large-magnitude block of the kernel matrix, denoted $\K_{21}$ below. The following remark shows that when this block is zero, the two methods perform the same.
\begin{remark} \label{rmk:same}
	Let $\PP\in\R^{n\times m}$ be the sampling matrix, where $m$ columns of the kernel matrix $\K\in\R^{n\times n}$ are sampled according to any
	distribution. Without loss of generality, the matrices $\CC$ and $\K$ can be permuted as follows:
	\begin{equation}
	\K=\begin{bmatrix}
	\WW & \K_{21}^T\\
	\K_{21} & \K_{22}
	\end{bmatrix},\;\;\CC=\begin{bmatrix}
	\WW\\
	\K_{21}
	\end{bmatrix},
	\end{equation}
	where $\K_{21}\in\R^{(n-m)\times m}$ and $\K_{22}\in\R^{(n-m)\times (n-m)}$. If $\K_{21}=\mathbf{0}_{(n-m)\times m}$, then Nystr\"om via QR decomposition and the standard Nystr\"om method generate the same rank-$r$ approximation of the kernel matrix $\K$, i.e., $\G_{(r)}^{opt}=\G_{(r)}^{nys}$. 
	\begin{proof}
		Given that $\K_{21}=\mathbf{0}_{(n-m)\times m}$, we have $\K^{1/2}=\bigl[\begin{smallmatrix}\WW^{1/2} & \mathbf{0}_{m\times (n-m)}\\
		\mathbf{0}_{(n-m)\times m} & \K_{22}^{1/2} \end{smallmatrix}\bigr]$ and $\D=\K^{1/2}\PP=\bigl[\begin{smallmatrix}	\WW^{1/2}\\
		\mathbf{0}_{(n-m)\times m}\end{smallmatrix}\bigr]$.
		Let us consider the EVD of $\WW=\VV\SIGMA\VV^T$, where $\VV,\SIGMA\in\R^{m\times m}$.
		Then, the left singular vectors of $\D$ have the following form: $\M=\bigl[\begin{smallmatrix}	\VV\\
		\mathbf{0}_{(n-m)\times m}\end{smallmatrix}\bigr]$.
		Thus, we get $\M^T\K^{1/2}=[\VV^T\WW^{1/2},\mathbf{0}_{m\times (n-m)}]$. Note that $\VV^T\WW^{1/2}=\SIGMA^{1/2}\VV^T$, and the best rank-$r$ approximation of $\M^T\K^{1/2}$ can be written as:
		\begin{equation}
		\llbracket\M^T\K^{1/2}\rrbracket_r=\left[\Y\SIGMA_r^{1/2}\VV_r^T,\mathbf{0}_{m\times (n-m)}\right],
		\end{equation}
		where $	\Y\defeq\bigl[\begin{smallmatrix}	\eye_{r\times r}\\
		\mathbf{0}_{(m-r)\times r}\end{smallmatrix}\bigr]\in\R^{m\times r}$.
		Next, we compute the matrix $\mathbf{T}'=\bigl[ \begin{smallmatrix}\M_r^T\\
		\mathbf{0}_{(m-r)\times n}\end{smallmatrix}\bigr]\K^{1/2}$ in the proof of Theorem \ref{thm:nys-qr-sta} by first simplifying $\M_r^T\K^{1/2}=\left[\VV_r^T\WW^{1/2},\mathbf{0}_{r\times (n-m)}\right]$. Since $\VV_r^T\WW^{1/2}=\SIGMA_r^{1/2}\VV_r^T$, we have $\mathbf{T}'=\bigl[\begin{smallmatrix}	\M_r^T\K^{1/2}\\
		\mathbf{0}_{(m-r)\times n}\end{smallmatrix}\bigr]=[\Y\SIGMA_r^{1/2}\VV_r^T,\mathbf{0}_{m\times (n-m)}]$.
		Thus, we get $\mathbf{T}'=\llbracket\M^T\K^{1/2}\rrbracket_r$ and
		this completes the proof.
	\end{proof}
\end{remark}
\begin{remark}\label{thm:remark-frob} In Theorem \ref{thm:nys-qr-sta}, we showed that Nystr\"om via QR decomposition generates improved rank-$r$ approximation of kernel matrices with respect to the trace norm. However, this property is not always satisfied in terms of the Frobenius norm. For example, consider the following $4\times 4$ SPSD  matrix:
	\begin{equation}\label{eq:kernel2}
	\K=\begin{bmatrix}
	1.0 & 0.7 & 0.9 & 0.4\\
	0.7 & 1.0 & 0.6 & 0.6\\
	0.9 & 0.6 & 1.0 & 0.6\\
	0.4 & 0.6 & 0.6 & 1.0
	\end{bmatrix}.
	\end{equation}
	If we sample the first and second column of $\K$ to form $\CC\in\R^{4\times 2}$, i.e., $m=2$, then we get $\|\K - \G_{(1)}^{nys}\|_*=1.3441
	$ and $\|\K - \G_{(1)}^{opt}\|_*=1.3299$. Thus, we have $\|\K - \G_{(1)}^{opt}\|_*\leq \|\K - \G_{(1)}^{nys}\|_*$, as expected by Theorem \ref{thm:nys-qr-sta}. If we compare these two error terms based on the Frobenius norm, then we see that
	$\|\K - \G_{(1)}^{nys}\|_F=0.9397$ and $\|\K - \G_{(1)}^{opt}\|_F=0.9409$. Thus, in this example, the standard Nystr\"om method has slightly better performance in terms of the Frobenius norm. To explain this observation, let us define $g(\mathbf{T})\defeq\| (\K^{1/2} - \M\mathbf{T})(\K^{1/2} - \M\mathbf{T})^T \|_F^2$ for an input matrix $\mathbf{T}\in\R^{m\times n}$ and fixed matrices $\K$ and $\M$. Based on the proof of Theorem \ref{thm:nys-qr-sta}, it is straightforward to show that $\| \K - \G_{(r)}^{nys}\|_F^2=g(\mathbf{T}^{nys})$, where $\mathbf{T}^{nys}=\bigl[ \begin{smallmatrix}\M_r^T\\
	\mathbf{0}_{(m-r)\times n}\end{smallmatrix}\bigr]\K^{1/2}$.
	Also, we have $\|\K-\G_{(r)}^{opt}\|_F^2=g(\mathbf{T}^{opt})$,  
	$\mathbf{T}^{opt}= \llbracket \M^T\K^{1/2}\rrbracket_r$. Additionally, we can express the error term $g(\mathbf{T})$ as the sum of the following three terms:
	\begin{align}
	g(\mathbf{T})=&\| \big(\K^{1/2}-\begin{bmatrix}
	\M & \M^\perp\end{bmatrix} \begin{bmatrix}
	\mathbf{T}\\
	\mathbf{0}
	\end{bmatrix}\big)\big(\K^{1/2}-\begin{bmatrix}
	\M & \M^\perp\end{bmatrix} \begin{bmatrix}
	\mathbf{T}\\
	\mathbf{0}
	\end{bmatrix}\big)^T  \|_F^2\nonumber\\
	=&\underbrace{\| (\M^T\K^{1/2}-\mathbf{T})(\M^T\K^{1/2}-\mathbf{T})^T \|_F^2}_{= g_1(\mathbf{T})}+2\underbrace{\| (\M^T\K^{1/2} - \mathbf{T})( \K^{1/2} \M^\perp) \|_F^2}_{=g_2(\mathbf{T})} + g_3,
	\end{align}
	where $\M^\perp\in\R^{n\times (n-m)}$ is the orthogonal complement of $\M$, we used the unitary invariance of the Frobenius norm, and $g_3=\| (\M^\perp)^T \K \M^\perp \|_F^2$ is independent of $\mathbf{T}$. For the given kernel matrix in \eqref{eq:kernel2},  we have $g_1(\mathbf{T}^{opt})=0.2524$, $g_2(\mathbf{T}^{opt})=0.0274$, $g_1(\mathbf{T}^{nys})=0.2669$, $g_2(\mathbf{T}^{nys})=0.0191$, and $g_3=0.5780$. Since  $g_1(\mathbf{T}^{opt})<g_1(\mathbf{T}^{nys})$ and $g_2(\mathbf{T}^{opt})>g_2(\mathbf{T}^{nys})$ in this example, it is clear that the modified Nystr\"om method does not necessarily lead to more accurate approximations with respect to the Frobenius norm. 
	Even though it possible that the standard Nystr\"om can outperform Nystr\"om via QR decomposition in the Frobenius norm,
	our substantial numerical experiments on real-world data sets in Section \ref{sec:exper} show that this does not happen in practice.
\end{remark}
Next, we present an important theoretical result on the quality of rank-$r$ Nystr\"om approximations when the number of landmark points are increased. Specifically, let us first sample $m_1\geq r$ landmark points from the set of input data points to generate the rank-$r$ approximation using the modified Nystr\"om method, namely $\G_{(r)}^{opt}$. If we sample $(m_2-m_1)\in\mathbb{N}$ additional landmark points to form the new rank-$r$ approximation $\widetilde{\G}_{(r)}^{opt}$ using the total of $m_2$ landmark points, the following result states that $	\|\K-\widetilde{\G}_{(r)}^{opt}\|_*\leq \|\K-\G_{(r)}^{opt}\|_*$. Therefore, increasing the number of distinct landmark points in the modified Nystr\"om method leads to improved rank-$r$ approximation.
\begin{thm}[More landmark points help] \label{thm:nys-qr-std-more}
	Let $\K\in\R^{n\times n}$ be an SPSD kernel matrix, and $r$ be the target rank. Consider a sampling matrix $\PP\in\R^{n\times m_1}$, with $m_1\geq r$, such that $\CC=\K\PP\in\R^{n\times m_1}$ and $\WW=\PP^T\K\PP\in\R^{m_1\times m_1}$. Then, the modified Nystr\"om method generates the rank-$r$ approximation of $\K$ as $\G_{(r)}^{opt}=\llbracket\CC\WW^\dagger\CC^T\rrbracket_r$. Increase the number of distinct landmark points by concatenating the matrix $\PP$ with $\PP^{new}\in\R^{n\times (m_2-m_1)}$ with $m_2>m_1$, i.e., $\widetilde{\PP}=[\PP,\PP^{new}]\in\R^{n\times m_2}$. The resulting matrix $\widetilde{\PP}$ can be used to form  $\widetilde{\CC}=\K\widetilde{\PP}\in\R^{n\times m_2}$ and $\widetilde{\WW}=\widetilde{\PP}^T\K\widetilde{\PP}\in\R^{m_2\times m_2}$, and the modified Nystr\"om method generates $\widetilde{\G}_{(r)}^{opt}=\llbracket\widetilde{\CC}\widetilde{\WW}^\dagger\widetilde{\CC}^T\rrbracket_r$. Then this new approximation is better in the sense that: 
	\begin{equation}
	\|\K-\widetilde{\G}_{(r)}^{opt}\|_*\leq \|\K-\G_{(r)}^{opt}\|_*.
	\end{equation}
\end{thm}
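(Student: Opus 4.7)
The plan is to recast the trace-norm error of the modified Nystr\"om approximation as a constrained low-rank approximation problem in which the constraint depends only on the column space of the left singular vectors $\M$ of $\D = \K^{1/2}\PP$. Once the problem is in that form, enlarging the set of landmark points enlarges this column space, and the optimal value over a larger feasible set can only go down.

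Concretely, I would first reuse the computation carried out in the proof of Theorem~\ref{thm:nys-qr-sta}: for in-sample landmark points we have $\CC\WW^\dagger\CC^T = \K^{1/2}\M\M^T\K^{1/2}$, and the same chain of equalities culminating in Lemma~\ref{lemma:best-rank-ONB} gives
\begin{equation}
\|\K-\G_{(r)}^{opt}\|_* \;=\; \min_{\mathbf{T}\,:\,\rank(\mathbf{T})\leq r}\|\K^{1/2}-\M\mathbf{T}\|_F^2,
\end{equation}
and analogously, with $\widetilde{\M}$ the left singular vectors of $\widetilde{\D}=\K^{1/2}\widetilde{\PP}$,
\begin{equation}
\|\K-\widetilde{\G}_{(r)}^{opt}\|_* \;=\; \min_{\widetilde{\mathbf{T}}\,:\,\rank(\widetilde{\mathbf{T}})\leq r}\|\K^{1/2}-\widetilde{\M}\widetilde{\mathbf{T}}\|_F^2.
\end{equation}
Both sides now depend on the target matrix $\M$ (resp.\ $\widetilde{\M}$) only through its column span.

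Next I would observe that $\text{range}(\M)=\text{range}(\D)=\text{range}(\K^{1/2}\PP)$, and since $\widetilde{\PP}=[\PP,\PP^{new}]$ is a column-extension of $\PP$, we have $\text{range}(\K^{1/2}\PP)\subseteq\text{range}(\K^{1/2}\widetilde{\PP})=\text{range}(\widetilde{\M})$. Thus any matrix of the form $\M\mathbf{T}$ with $\rank(\mathbf{T})\leq r$ is also expressible as $\widetilde{\M}\widetilde{\mathbf{T}}$ with $\rank(\widetilde{\mathbf{T}})\leq r$: if $\M=\widetilde{\M}\SSS$ for some $\SSS\in\R^{m_2\times m_1}$ (which exists by the containment of column spaces), then $\widetilde{\mathbf{T}}=\SSS\mathbf{T}$ has rank at most $r$ and $\widetilde{\M}\widetilde{\mathbf{T}}=\M\mathbf{T}$. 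Consequently the feasible set for the $\widetilde{\M}$ problem contains the feasible set for the $\M$ problem, and the two minimizations obey $\min_{\widetilde{\mathbf{T}}}\|\K^{1/2}-\widetilde{\M}\widetilde{\mathbf{T}}\|_F^2 \leq \min_{\mathbf{T}}\|\K^{1/2}-\M\mathbf{T}\|_F^2$, which is the desired inequality.

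The main obstacle is a technicality: the alternative formulation used in Theorem~\ref{thm:nys-qr-sta} was derived under the simplifying assumption that $\D$ and $\WW$ have full column rank, so I would briefly justify that the reduction to a column-space minimization remains valid in general, provided $\rank(\D)\geq r$ (which is automatic under the hypothesis $m_1\geq r$ for typical in-sample selections, and holds for $\widetilde{\D}$ as well since its column space contains that of $\D$). Beyond this, the argument is entirely a nested-feasible-set comparison, so no delicate spectral estimate is required.
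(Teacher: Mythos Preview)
Your proposal is correct and follows essentially the same route as the paper's own proof: both recast $\|\K-\G_{(r)}^{opt}\|_*$ via \eqref{eq:thm-1-proof} as $\min_{\mathbf{T}:\rank(\mathbf{T})\le r}\|\K^{1/2}-\M\mathbf{T}\|_F^2$ and then invoke $\text{range}(\M)\subseteq\text{range}(\widetilde{\M})$ to compare the two minimizations. Your explicit construction of $\widetilde{\mathbf{T}}=\SSS\mathbf{T}$ and your remark on the full-rank technicality are minor elaborations, but the core argument is identical.
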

\begin{proof}
	Let $\M\in\R^{n\times m_1}$ and $\widetilde{\M}\in\R^{n\times m_2}$  be the left singular vectors of $\K^{1/2}\PP\in\R^{n\times m_1}$ and $\K^{1/2}\widetilde{\PP}=[\K^{1/2}\PP,\K^{1/2}\PP^{new}]\in\R^{n\times m_2}$, respectively. Then, we get: 
	\begin{equation}
	\|\K- \widetilde{\G}_{(r)}^{opt}\|_*= \min_{\widetilde{\mathbf{T}}:\;\rank(\widetilde{\mathbf{T}})\leq r} \|\K^{1/2} - \widetilde{\M} \widetilde{\mathbf{T}}\|_F^2\nonumber \\
	\leq   \min_{\mathbf{T}:\;\rank(\mathbf{T})\leq r} \|\K^{1/2} - \M \mathbf{T}\|_F^2\nonumber\\
	=  \|\K- \G_{(r)}^{opt}\|_*
	\end{equation}
	where both equalities follow from \eqref{eq:thm-1-proof} and 
	the inequality follows from the fact that $\text{Range}(\M)\subset\text{Range}(\widetilde{\M})$. 
\end{proof}

\begin{remark} \label{rmk:1}
	Theorem \ref{thm:nys-qr-std-more} is not true for the standard Nystr\"om method. 
	Consider the kernel matrix from Example \ref{example1}. By sampling the first two columns,
	the standard Nystr\"om method gave relative errors of $0.99$ in both the trace and Frobenius norms.
	Had we sampled just the first column, the standard Nystr\"om method would have $0.01$ relative error in these norms, meaning that adding additional landmark points leads to a worse approximation.  See also Remark \ref{rmk:2} for experiments.
\end{remark}
\section{Extension to Out-of-Sample Landmark Points}\label{sec:thm-outofsample}
The main component in our theoretical results is the existence of $\PP\in\R^{n\times m}$ such that $\CC$ and $\WW$ can be written as: $\CC=\K\PP$ and $\WW=\PP^T\K\PP$. As mentioned, this assumption holds true if the landmark points are selected 
(randomly or arbitrarily)
from the set of input data points, since then $\PP$ is a sampling matrix consisting of columns of the identity matrix. However, some recent selection techniques utilize out-of-sample extensions of the input data to improve the accuracy of the Nystr\"om method, e.g., centroids found from K-means clustering. In this case, the matrix $\CC\in\R^{n\times m}$, where $C_{ij}=\kappa(\x_i,\z_j)$, does not necessarily contain the columns of $\K$. Thus, we cannot hope for a sampling matrix $\PP$ that satisfies $\CC=\K\PP$. In this section, we present two techniques to show that our theoretical results hold for the case of out-of-sample landmark points.

\subsection{Approach 1: Full-Rank Kernel Matrices}
We show that, under certain conditions, our theoretical results in Section \ref{sec:theory} are applicable to the case of out-of-sample landmark points. To be formal, consider a set of $n$ distinct data points in $\R^p$, i.e., $\X=[\x_1,\ldots,\x_n]\in\R^{p\times n}$, and the Gaussian kernel of the form $\kappa(\x_i,\x_j)=\exp(-\|\x_i-\x_j\|_2^2/c)$, $c>0$, which leads to the kernel matrix $\K\in\R^{n\times n}$. Let $\Z=[\z_1,\ldots,\z_m]\in\R^{p\times m}$ be $m$ cluster centroids from K-means clustering on $\x_1,\ldots,\x_n$, and we form $\CC\in\R^{n\times m}$ and $\WW\in\R^{m\times m}$ with $C_{ij}=\kappa(\x_i,\z_j)$ and $W_{ij}=\kappa(\z_i,\z_j)$. 

Based on \cite[Theorem 2.18]{LearningWithKernels}, the kernel matrix $\K$ defined on a set of $n$ distinct data points using the Gaussian kernel function has full rank. Thus by defining $\PP=\K^{-1}\CC\in\R^{n\times m}$, we can write $\CC=\K\PP$, but because this $\PP$ is not a sampling matrix, it does not follow that $\WW=\PP^T\K\PP$, so our aim is to show that $\WW\approx\PP^T\K\PP$. Let us consider the \emph{empirical} kernel map $\Phi_e$, defined on the set of input data points:
\begin{equation}
\Phi_{e}(\z): \z\mapsto \K^{-1/2}\left[\kappa(\x_1,\z),\ldots,\kappa(\x_n,\z)\right]^T\in\R^n.\label{eq:emp-kernel-map}
\end{equation}
This map approximates the kernel-induced map $\Phi$ for out-of-sample data points $\z_1,\ldots,\z_m$ such that $\langle\Phi_e(\z_i),\Phi_e(\z_j)\rangle\approx \langle \Phi(\z_i),\Phi(\z_j)\rangle=\kappa(\z_i,\z_j)$~\cite{LearningWithKernels}. 
Since $\PP^T=\CC^T\K^{-1}$ and the $j$-th column of $\CC$ is $[\kappa(\x_1,\z_j),\ldots,\kappa(\x_n,\z_j)]^T$, we have: 
\begin{align}
\PP^T\K\PP&=\big(\CC^T\K^{-1/2}\big)\big(\K^{-1/2}\CC\big)\nonumber\\
& = \left[\Phi_e(\z_1),\ldots,\Phi_e(\z_m)\right]^T\left[\Phi_e(\z_1),\ldots,\Phi_e(\z_m)\right]\defeq\WW_e\in\R^{m\times m}.
\end{align}
Therefore, if we use out-of-sample landmark points with the Gaussian kernel function, there exists a matrix $\PP$ that satisfies $\CC=\K\PP$
and $\WW=\WW_e+\EE=\PP^T\K\PP+\EE$, where $\EE\in\R^{m\times m}$ represents the approximation error.
It is known that when the relative amount of error is small (e.g., with respect to the spectral norm), $\WW$ and $\WW_e$ are close to one another and their eigenvalues and eigenvectors are perturbed proportional to the relative error \cite{mathias1998relative,dopico2000weyl}. However, in this work, our goal is to prove that the approximations $\CC\WW^\dagger\CC^T$ and $\CC\WW_e^\dagger\CC^T$ are close to one another when the relative amount of error is small. To demonstrate the importance of this result, note that for any invertible matrices $\mathbf{M}$ and $\mathbf{M}'$, we have the identity $\mathbf{M}'^{-1}-\mathbf{M}^{-1}=-\mathbf{M}'^{-1}(\mathbf{M}'-\mathbf{M})\mathbf{M}^{-1}$. Thus, the small norm of $\mathbf{M}'-\mathbf{M}$ cannot be directly used to conclude $\mathbf{M}'^{-1}$ and $\mathbf{M}^{-1}$ are close to one another. In the following, we present an error bound for the difference between the Nystr\"om approximations, i.e., $\CC\WW^\dagger\CC^T-\CC\WW_e^\dagger\CC^T$, in terms of the relative amount of error caused by the empirical kernel map. 
\begin{thm}\label{thm:out-of-sample} Consider a set of $n$ distinct data points $\X=[\x_1,\ldots,\x_n]\in\R^{p\times n}$, and the Gaussian kernel function of the form $\kappa(\x_i,\x_j)=\exp(-\|\x_i-\x_j\|_2^2/c)$, $c>0$, which leads to the kernel matrix $\K\in\R^{n\times n}$. Let $\Z=[\z_1,\ldots,\z_m]\in\R^{p\times m}$ be arbitrary (e.g., $m$ distinct cluster centroids from K-means clustering on $\x_1,\ldots,\x_n$), and we form $\CC\in\R^{n\times m}$ and $\WW\in\R^{m\times m}$ with $C_{ij}=\kappa(\x_i,\z_j)$ and $W_{ij}=\kappa(\z_i,\z_j)$. Then, there exists a matrix $\PP\in\R^{n\times m}$ such that $\CC=\K\PP$ and $\WW=\WW_e+\EE$, where $\WW_e=\PP^T\K\PP$ and $\EE\in\R^{m\times m}$ represents the approximation error of the empirical kernel map defined in \eqref{eq:emp-kernel-map}. Assuming that $\eta\defeq\|\WW_e^{-1/2}\EE\WW_e^{-1/2}\|_2<1$ for the positive definite matrix $\WW_e$, then:
	\begin{equation}
	\frac{\|\CC\WW^{\dagger}\CC^T - \CC\WW_e^\dagger \CC^T\|_2}{\|\K\|_2} \leq \frac{\eta}{1-\eta}.
	\end{equation}
\end{thm}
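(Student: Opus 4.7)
The plan is to reduce the bound to a perturbation estimate for $\WW^{\dagger}-\WW_e^{-1}$ sandwiched between $\CC$ and $\CC^T$. Because $\WW_e$ is positive definite by hypothesis, I can factor
$$\WW = \WW_e+\EE = \WW_e^{1/2}(\eye_{m\times m}+\mathbf{B})\WW_e^{1/2}, \quad \mathbf{B}\defeq\WW_e^{-1/2}\EE\WW_e^{-1/2},$$
where $\mathbf{B}$ is symmetric with $\|\mathbf{B}\|_2=\eta<1$. Since the eigenvalues of $\eye+\mathbf{B}$ lie in $[1-\eta,1+\eta]\subset(0,2)$, the matrix $\WW$ is also positive definite, so $\WW^{\dagger}=\WW^{-1}=\WW_e^{-1/2}(\eye+\mathbf{B})^{-1}\WW_e^{-1/2}$. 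The identity $(\eye+\mathbf{B})^{-1}-\eye = -(\eye+\mathbf{B})^{-1}\mathbf{B}$ then yields
$$\WW^{\dagger}-\WW_e^{-1} = -\WW_e^{-1/2}(\eye+\mathbf{B})^{-1}\mathbf{B}\WW_e^{-1/2}.$$

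Sandwiching with $\CC$ and $\CC^T$ and applying sub-multiplicativity of the spectral norm gives
$$\|\CC(\WW^{\dagger}-\WW_e^{-1})\CC^T\|_2 \leq \|\CC\WW_e^{-1/2}\|_2^2 \cdot \|(\eye+\mathbf{B})^{-1}\|_2 \cdot \|\mathbf{B}\|_2,$$
where I have used $\|\WW_e^{-1/2}\CC^T\|_2 = \|\CC\WW_e^{-1/2}\|_2$. The two middle factors are immediately bounded by $1/(1-\eta)$ and $\eta$, respectively, from the spectral properties of $\mathbf{B}$.

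The decisive step is showing $\|\CC\WW_e^{-1/2}\|_2^2\leq\|\K\|_2$. For this I would reuse the SVD $\K^{1/2}\PP=\M\SSS\NN^T$ already employed in Section~\ref{sec:thm-main}; positive definiteness of $\WW_e$ guarantees that $\K^{1/2}\PP$ has full column rank, so $\NN\in\R^{m\times m}$ is orthogonal and $\SSS$ is invertible. Substituting $\CC=\K^{1/2}\M\SSS\NN^T$ and $\WW_e^{-1}=\NN\SSS^{-2}\NN^T$, the factors collapse to the clean identity
$$\CC\WW_e^{-1}\CC^T = (\K^{1/2}\M\SSS\NN^T)(\NN\SSS^{-2}\NN^T)(\NN\SSS\M^T\K^{1/2}) = \K^{1/2}\M\M^T\K^{1/2},$$
in which $\M\M^T$ is an orthogonal projection. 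Therefore $\|\CC\WW_e^{-1/2}\|_2^2 = \|\CC\WW_e^{-1}\CC^T\|_2 \leq \|\K^{1/2}\|_2^2 = \|\K\|_2$. Combining the three factors gives $\|\CC(\WW^{\dagger}-\WW_e^{-1})\CC^T\|_2 \leq \|\K\|_2\cdot\eta/(1-\eta)$, which after dividing by $\|\K\|_2$ is the desired inequality.

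The main obstacle is the final bound: the naive estimate $\|\CC\WW_e^{-1/2}\|_2\leq\|\CC\|_2\,\|\WW_e^{-1/2}\|_2$ would involve $1/\sqrt{\sigma_{\min}(\WW_e)}$ and could be arbitrarily large when the landmark points are near-collinear in feature space. The identity $\CC\WW_e^{-1}\CC^T=\K^{1/2}\M\M^T\K^{1/2}$, which recognizes the ``empirical-kernel'' Nystr\"om matrix as $\K^{1/2}$ orthogonally projected onto $\text{range}(\M)$, is precisely what absorbs this potential ill-conditioning into a projection and replaces a loose bound with the crisp $\|\K\|_2$.
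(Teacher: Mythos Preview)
Your proof is correct and reaches the same bound by a more direct route than the paper. The paper introduces the eigendecomposition $\WW=\widetilde{\NN}\widetilde{\SSS}^2\widetilde{\NN}^T$, defines an auxiliary matrix $\widetilde{\EE}=\SSS\NN^T\widetilde{\NN}\widetilde{\SSS}^{-2}\widetilde{\NN}^T\NN\SSS-\eye$, and after several manipulations shows that $\widetilde{\EE}=(\eye+\mathbf{O})^{-1}-\eye$ with $\mathbf{O}=\SSS^{-1}\NN^T\EE\NN\SSS^{-1}$ (which is unitarily similar to your $\mathbf{B}$, since $\WW_e^{-1/2}=\NN\SSS^{-1}\NN^T$); it then bounds $\|\widetilde{\EE}\|_2$ via a Neumann series and uses the exact structural identity $\CC\WW^{\dagger}\CC^T-\CC\WW_e^{\dagger}\CC^T=\K^{1/2}\M\widetilde{\EE}\M^T\K^{1/2}$. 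You instead factor $\WW=\WW_e^{1/2}(\eye+\mathbf{B})\WW_e^{1/2}$ and apply the resolvent identity $(\eye+\mathbf{B})^{-1}-\eye=-(\eye+\mathbf{B})^{-1}\mathbf{B}$ directly, bypassing the eigendecomposition of $\WW$ and the Neumann series altogether. Both arguments ultimately rest on the same key step, namely the projection identity $\CC\WW_e^{-1}\CC^T=\K^{1/2}\M\M^T\K^{1/2}$, which converts the potentially ill-conditioned factor $\|\CC\WW_e^{-1/2}\|_2^2$ into the clean $\|\K\|_2$. The paper's version has the merit of exhibiting the difference explicitly as a single sandwiched term $\K^{1/2}\M\widetilde{\EE}\M^T\K^{1/2}$, while yours is shorter and more elementary.
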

\newcommand{\OO}{\mathbf{O}}
\begin{proof} The kernel matrix $\K$ using the Gaussian kernel function has full rank. Thus, there exists a matrix $\PP$ such that $\CC=\K\PP$ and the empirical kernel map $\Phi_e$ can be defined as in \eqref{eq:emp-kernel-map}. Recall the SVD of $\D=\K^{1/2}\PP=\M\SSS\NN^T$, and the EVD of $\WW_e=\PP^T\K\PP=\NN\SSS^2\NN^T$. Let $\WW=\widetilde{\NN}\widetilde{\SSS}^2\widetilde{\NN}^T$ be the EVD of $\WW$ (since $\WW$ is also SPSD). Moreover, let us define:
	\begin{equation}
	\widetilde{\EE}\defeq \SSS \NN^T \widetilde{\NN} \widetilde{\SSS}^{-2} \widetilde{\NN}^T\NN\SSS - \eye_{m\times m}\in\R^{m\times m}.\label{eq:error-perturb}
	\end{equation} 
	If we have $\EE=\mathbf{0}_{m\times m}$, i.e., the approximate kernel map $\Phi_e$ is equal to the kernel-induced map, then $\widetilde{\NN}=\NN$, $\widetilde{\SSS}=\SSS$, and $\widetilde{\EE}=\mathbf{0}_{m\times m}$. Next, we find an upper bound for $\|\widetilde{\EE}\|_2$ in terms of the relative error $\eta$. Consider the EVD of $\WW=\WW_e+\EE$:
	\begin{equation}
	\widetilde{\NN}\widetilde{\SSS}^2\widetilde{\NN}^T = \NN\SSS^2\NN^T+\EE
	= \NN\SSS\left(\eye_{m\times m}+\mathbf{O}\right)\SSS\NN^T, \label{eq:perturb1}
	\end{equation}
	where $\OO\defeq\SSS^{-1}\NN^T\EE\NN\SSS^{-1}\in\R^{m\times m}$ is a symmetric matrix. Note that $\|\mathbf{O}\|_2=\|\NN\OO\NN^T\|_2=\eta$, because of the unitary invariance of the spectral norm. If we multiply \eqref{eq:perturb1} on the left by $\NN^T$ and on the right by $\widetilde{\NN}$, we get:
	\begin{equation}
	\NN^T\widetilde{\NN}\widetilde{\SSS}^2 = \SSS\left(\eye_{m\times m}+\mathbf{O}\right)\SSS\NN^T\widetilde{\NN}. \label{eq:perturb2}
	\end{equation}
	Next, we multiply \eqref{eq:perturb2} on the left by $\widetilde{\NN}^T\NN$, and we see that:
	\begin{equation}
	\widetilde{\SSS}^2= \widetilde{\NN}^T\NN \SSS^2 \NN^T\widetilde{\NN}+\widetilde{\NN}^T\NN\SSS\mathbf{O} \SSS \NN^T\widetilde{\NN}.\label{eq:perturb3}
	\end{equation}
	Finally, we multiply \eqref{eq:perturb3} on the left and right by $\widetilde{\SSS}^{-1}$:
	\begin{equation}
	\eye_{m\times m}=\big(\widetilde{\SSS}^{-1}\widetilde{\NN}^T\NN \SSS\big)\big(\eye_{m\times m} + \mathbf{O}\big)\big(\SSS \NN^T\widetilde{\NN}\widetilde{\SSS}^{-1}\big).
	\end{equation}
	Thus, we observe that $\SSS \NN^T\widetilde{\NN}\widetilde{\SSS}^{-1}=(\eye_{m\times m}+\mathbf{O})^{-1/2}\mathbf{T}$, where $\mathbf{T}\in\R^{m\times m}$ is an orthogonal matrix. Thus, we have:
	\begin{equation}
	\widetilde{\EE}=\big(\eye_{m\times m} + \mathbf{O}\big)^{-1} - \eye_{m\times m}.\label{eq:perturb4}
	\end{equation}
	To find an upper bound for the spectral norm of  $\widetilde{\EE}$, we simplify \eqref{eq:perturb4} by using the Neumann series $(\eye_{m\times m} + \mathbf{O})^{-1} = \sum_{n=0}^{\infty}(-1)^n\mathbf{O}^{n}$, where $\|\mathbf{O}\|_2=\eta<1$ by assumption. Hence, we get the following upper bound for the spectral norm of $\widetilde{\EE}=\sum_{n=1}^{\infty}(-1)^n\mathbf{O}^n$:
	\begin{equation}
	\|\widetilde{\EE}\|_2\leq\sum_{n=1}^{\infty} \|\mathbf{O}^n\|_2\leq\sum_{n=1}^{\infty} \|\mathbf{O}\|_2^n\leq\sum_{n=1}^{\infty} \eta^n=\frac{\eta}{1-\eta},
	\end{equation}
	where we have used the convergence of the Neumann series and the continuity of norms in the first inequality and  the submultiplicativity property of the spectral norm in the second. 
	To finish, we relate
	the difference between the two Nystr\"om approximations $\CC\WW^\dagger\CC^T$ and $\CC\WW_e^\dagger\CC^T$ to the norm of $\widetilde{\EE}$:
	\begin{equation}
	\CC\big(\WW^\dagger\big)^{1/2}=\K^{1/2}\D\Big(\widetilde{\NN}\widetilde{\SSS}^{-1}\widetilde{\NN}^T\Big)=\K^{1/2} \M \SSS \NN^T \widetilde{\NN} \widetilde{\SSS}^{-1} \widetilde{\NN}^T.
	\end{equation}
	Then, given the definition of $\widetilde{\EE}$ in \eqref{eq:error-perturb}, we observe that:
	\begin{equation}
	\CC\WW^{\dagger}\CC^T - \CC\WW_e^\dagger \CC^T = \K^{1/2}\M\widetilde{\EE} \M^T\K^{1/2}.
	\end{equation}
	Thus, using the submultiplicativity property of the spectral norm, we have:
	\begin{equation}
	\|\CC\WW^{\dagger}\CC^T - \CC\WW_e^\dagger \CC^T\|_2\leq \|\K^{1/2}\|_2^2\|\M\|_2^2\|\widetilde{\EE}\|_2.
	\end{equation}
	Note that $\|\K^{1/2}\|_2^2=\|\K\|_2$, and $\|\M\|_2=1$ since $\M$ has orthonormal columns.
\end{proof}
To gain some intuition for Theorem \ref{thm:out-of-sample},  we present a numerical experiment on the  \dataset{pendigits} data set ($p=16$ and $n=10,\!992$)  used in Section \ref{sec:exper}. Here, the Gaussian kernel function is employed with the parameter $c$ chosen as the averaged squared distances between all the data points and sample mean. The standard K-means clustering algorithm is performed on the input data points to select the landmark points $\z_1,\ldots,\z_m$ for various values of $m=2,\ldots,10$. For each value of $m$, we form two matrices $\CC$ and $\WW$. Also, we compute $\WW_e=\PP^T\K\PP$ and $\eta=\|\WW_e^{-1/2}\EE\WW_e^{-1/2}\|_2$, where $\PP=\K^{-1}\CC$ and $\EE=\WW-\WW_e$; calculating $\WW_e$ is impractical for larger data sets and we do so only to support our theorem.  Figure \ref{fig:bound} reports the mean of $\|\CC\WW^{\dagger}\CC^T - \CC\WW_e^\dagger \CC^T\|_2/\|\K\|_2$ over $50$ trials for varying number of landmark points. The figure also plots the mean of our theoretical bound in Theorem \ref{thm:out-of-sample}, i.e., $\eta/(1-\eta)$. We observe that $\CC\WW^\dagger\CC^T$ and $\CC\WW_e^\dagger\CC^T$ provide very similar Nystr\"om approximations of $\K$, such that the relative error with respect to the spectral norm is less than $0.002$. Furthermore, it is clear that Theorem \ref{thm:out-of-sample}  provides a meaningful upper bound for the relative error of the Nystr\"om approximations.
\begin{figure}[tbhp]
	\centering
	\includegraphics[width=0.6\textwidth]{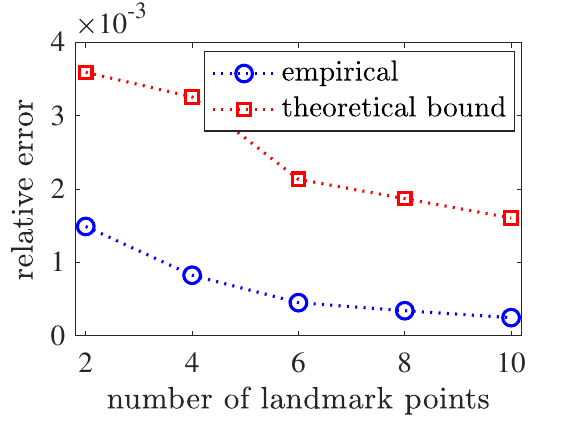}
	\caption{Mean of the relative error and the theoretical error bound $\eta/(1-\eta)$.}
	\label{fig:bound}
\end{figure}

Based on Theorem \ref{thm:out-of-sample}, the closeness of $\CC\WW^\dagger\CC^T$ and $\CC\WW_e^\dagger\CC^T$ with respect to the spectral norm is a function of the quantity $\eta$. Note that $\eta$ measures the relative amount of perturbation of the eigenvalues and eigenvectors of $\WW_e$, and we have $\eta=\|\WW_e^{-1/2}\EE\WW_e^{-1/2}\|_2\leq \|\WW_e^{-1}\EE\|_2\leq \|\WW_e^{-1}\|_2\|\EE\|_2$ \cite[Lemma 2.2]{dopico2000weyl}. Therefore, when $\|\EE\|_2$ is small, $\CC\WW^\dagger\CC^T$ and $\CC\WW_e^\dagger\CC^T$ lead to similar low-rank approximations of the kernel matrix. In particular, as $\|\EE\|_2$ goes to zero, $\CC\WW_e^\dagger\CC^T$ converges to  $\CC\WW^\dagger\CC^T$. 
Hence, we expect our theoretical results on the rank-$r$ Nystr\"om approximations to be valid for the case out-of-sample landmark points for small values of $\eta$.  

\subsection{Approach 2: Clustered Nystr\"om} We present an alternative to our previous approach that does not require full-rank kernel matrices. This technique is motivated by the clustered Nystr\"om method of Zhang et al.~\cite{zhang2008improved,zhang2010clusteredNys}, which shows that the Nystr\"om error can be bounded
by the quantization error of the input data using landmark points. Assume that the kernel function $\kappa$ satisfies: $(\kappa(\mathbf{a},\mathbf{b})-\kappa(\mathbf{c},\mathbf{d}))^2\leq \eta (\|\mathbf{a}-\mathbf{c}\|_2^2+\|\mathbf{b}-\mathbf{d}\|_2^2)$, where $\eta$ is a constant depending on $\kappa$. It is shown that for a number of widely used kernel functions, such as Gaussian and polynomial kernels, this property is satisfied. Then, the Nystr\"om approximation error is upper bounded: $\|\K-\CC\WW^\dagger\CC^T\|_F\leq \eta_1\sqrt{\phi}+\eta_2\phi$, where $\phi\defeq1/n\cdot\sum_{i=1}^{n}\|\x_i - \mu(\x_i)\|_2^2$, $\mu(\x_i)\in\R^p$ is the closest landmark point to each data point $\x_i$, and $\eta_1$, $\eta_2$ are two constants.

Given out-of-sample landmark points, this approach approximates the landmark points with in-sample points, and pays just a factor of $2$ on the quantization error. To prove this, without loss of generality and to simplify notation, we consider a single cluster since we can work cluster-by-cluster. If the cluster assignments change after selecting the new landmark points, that can only reduce the quantization error further. Thus, let $\mathbf{s}$ be the (possibly out-of-sample) center of mass for the points $\x_i, i = 1,\ldots, n$, i.e., $\mathbf{s} = 1/n\cdot \sum_{i=1}^n \x_i$. As observed in \cite[Lemma 2.1]{kmeans_plusplus}, for any vector $\mathbf{t}\in \R^p$:
\begin{equation}\label{eq:lemma212}
\frac{1}{n}\sum_{i=1}^n \|\x_i - \mathbf{t} \|_2^2 = \frac{1}{n}\sum_{i=1}^n \|\x_i - \mathbf{s} \|_2^2 + \|\mathbf{t}-\mathbf{s}\|_2^2.
\end{equation}
We will choose a new in-sample cluster center: $\mathbf{t} = \x_j\;\text{for}\; j\in \argmin_{i\in \{1,\ldots,n\}}\, \|\x_i - \mathbf{s}\|_2^2$.
Let $\widehat{\phi}$ denote the quantization error with this new in-sample center, and $\phi$ be the original quantization error using $\mathbf{s}$ as the cluster center. Then
\begin{align}
\widehat{\phi} \defeq \frac{1}{n}\sum_{i=1}^n \|\x_i - \mathbf{t} \|_2^2 &= \frac{1}{n}\sum_{i=1}^n \|\x_i - \mathbf{s} \|_2^2 + \|\mathbf{t}-\mathbf{s}\|_2^2 \nonumber\\
&\le \frac{1}{n}\sum_{i=1}^n \|\x_i - \mathbf{s} \|_2^2 +  \sum_{i=1}^n\frac{1}{n}\|\x_i - \mathbf{s}\|_2^2= 2 \phi,
\end{align}
where the first line follows from \eqref{eq:lemma212}.
Note that the computational cost is small, since choosing an in-sample cluster center is $\order(np)$.

\section{Experimental Results}\label{sec:exper}
We present experimental results on the fixed-rank approximation of kernel matrices using the standard and modified Nystr\"om methods, and show evidence to (1) corroborate our theory of Section \ref{sec:theory}, (2) suggest that our theory of Section \ref{sec:thm-outofsample} still holds even if the assumptions are not exactly satisfied (e.g., out-of-sample landmark points), and (3) highlight the benefits of the modified method.
In order to illustrate the effectiveness of modified Nystr\"om, we compare its  accuracy to that of the standard Nystr\"om method for the target rank $r=2$ and varying number of landmark points $m=r,\ldots,5r$. To provide a baseline for the comparison, we report the accuracy of the best rank-$2$ approximation  obtained via the eigenvalue decomposition (EVD), which requires the computation and storage of full kernel matrices and hence is impractical for very large data sets.

\subsection{Fixed-Rank Approximation Error}\label{sec:exp-1}Experiments are conducted on four data sets from the LIBSVM archive~\cite{CC01a}: (1) 	\dataset{pendigits} ($p=16$ and $n=10,\!992$); (2) \dataset{satimage} ($p=36$ and $n=6,\!435$); (3) \dataset{w6a} ($p=300$ and $n=13,\!267$); and (4) \dataset{E2006-tfidf} ($p=150,\!360$ and $n=3,\!000$). In all experiments,  the Gaussian kernel $\kappa\left(\x_i,\x_j\right)=\exp\left(-\|\x_i-\x_j\|_2^2/c\right)$ is used with the parameter $c$ chosen as the averaged squared distances between all the data points and sample mean. We consider two landmark selection techniques: 
(1) uniform sampling, where $m$ landmark points are selected uniformly at random without replacement from $n$ data points; and (2) out-of-sample landmark points obtained via K-means clustering on the original data set, as in \cite{zhang2010clusteredNys}. To perform the K-means clustering algorithm, we use MATLAB's built-in function \texttt{kmeans} and the maximum number of iterations is set to $10$. 
A MATLAB implementation of modified and standard Nystr\"om is available at  \url{https://github.com/pourkamali/RandomizedClusteredNystrom}.

We measure the quality of fixed-rank approximations using the relative error with respect to the trace norm, i.e., $\|\K-\G_{(r)}^{nys}\|_*/\|\K\|_*$ vs.~$\|\K-\G_{(r)}^{opt}\|_*/\|\K\|_*$,
and thus in this metric, no method can ever outperform the EVD baseline.
In Figure \ref{fig:err-trace}, the mean and standard deviation of relative error over $50$ trials are reported for all four data sets and varying number of landmark points $m=r,\ldots,5r$. Modified Nystr\"om and the standard Nystr\"om method have identical performance for $m=r$ because no rank restriction step is involved in this case.
As the number of landmark points increases beyond the rank parameter $r=2$,
Nystr\"om via QR decomposition always generates better rank-$2$ approximations of the kernel matrix
than the standard method does. 
This observation is consistent with Theorem \ref{thm:nys-qr-sta}, which states that for the same landmark points, $\|\K-\G_{(r)}^{opt}\|_*\leq\|\K-\G_{(r)}^{nys}\|_*$ (one can divide both sides by constant $\|\K\|_*$ for the relative error).
Even with out-of-sample landmark points, the modified method can be drastically better. 
In the right plot of Figure \ref{fig:err_trace_2}, the modified method achieves a mean error of $0.47$ (compared to the $0.45$ EVD baseline) from just $m=2r$ landmark points, while the standard method has a higher mean error ($0.50$) even using $m=5r$ landmark points. This exemplifies the importance and effectiveness of the precise rank restriction step in the Nystr\"om via QR decomposition method.
\begin{figure}[tbhp]
	\centering
	\subfloat[\dataset{pendigits}]{
		\includegraphics[width=0.75\textwidth]{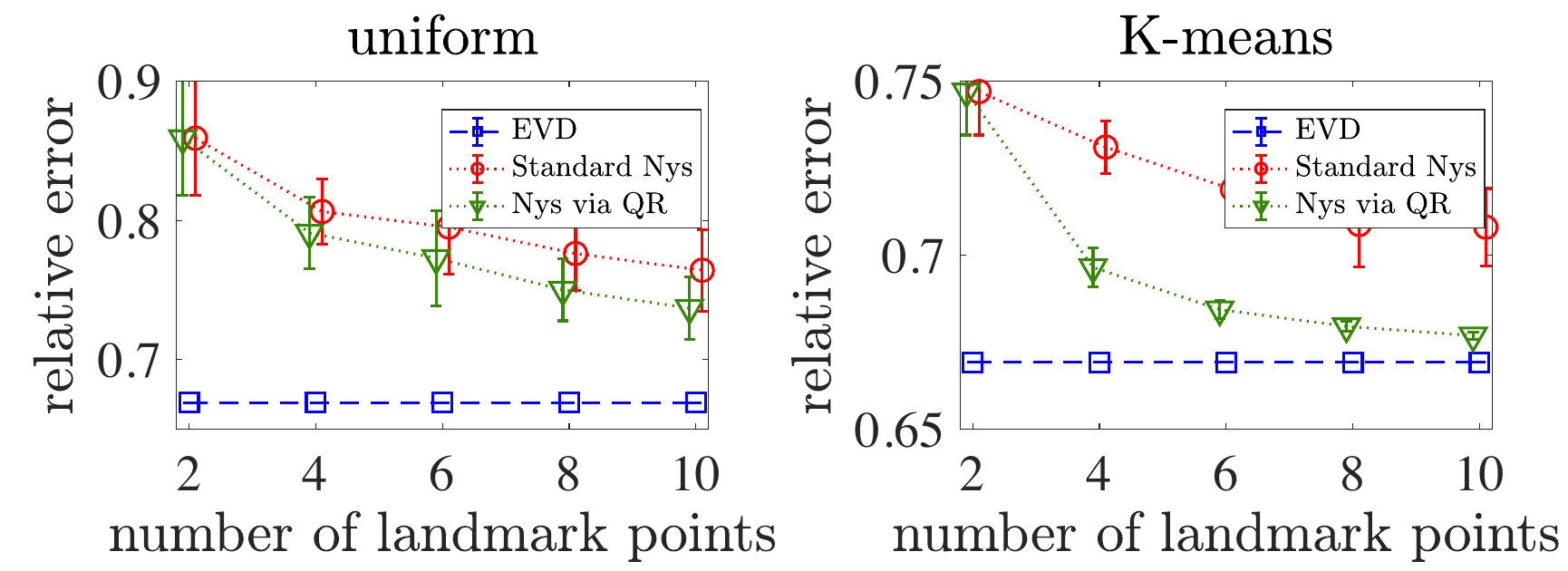}
		\label{fig:err_trace_1}
	}
	
	\subfloat[\dataset{satimage}]{
		\includegraphics[width=0.75\textwidth]{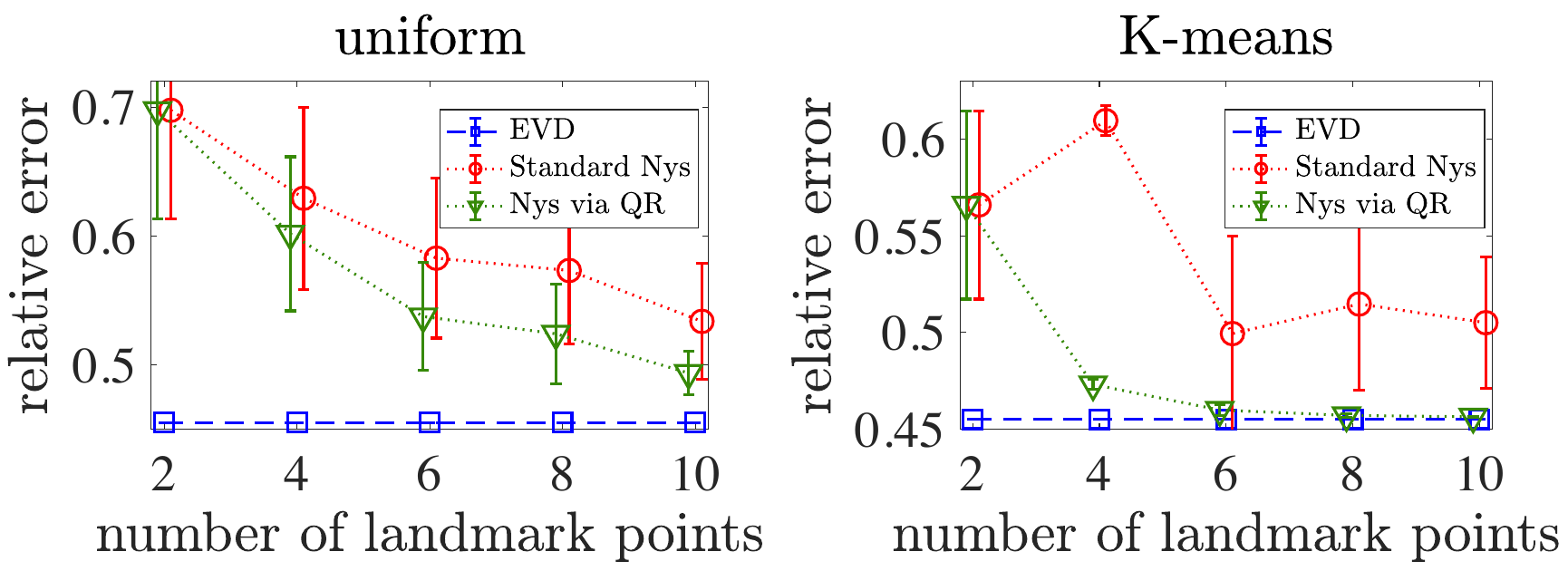}\label{fig:err_trace_2}
	}\\
	
	\centering
	\subfloat[\dataset{w6a}]{
		\includegraphics[width=0.75\textwidth]{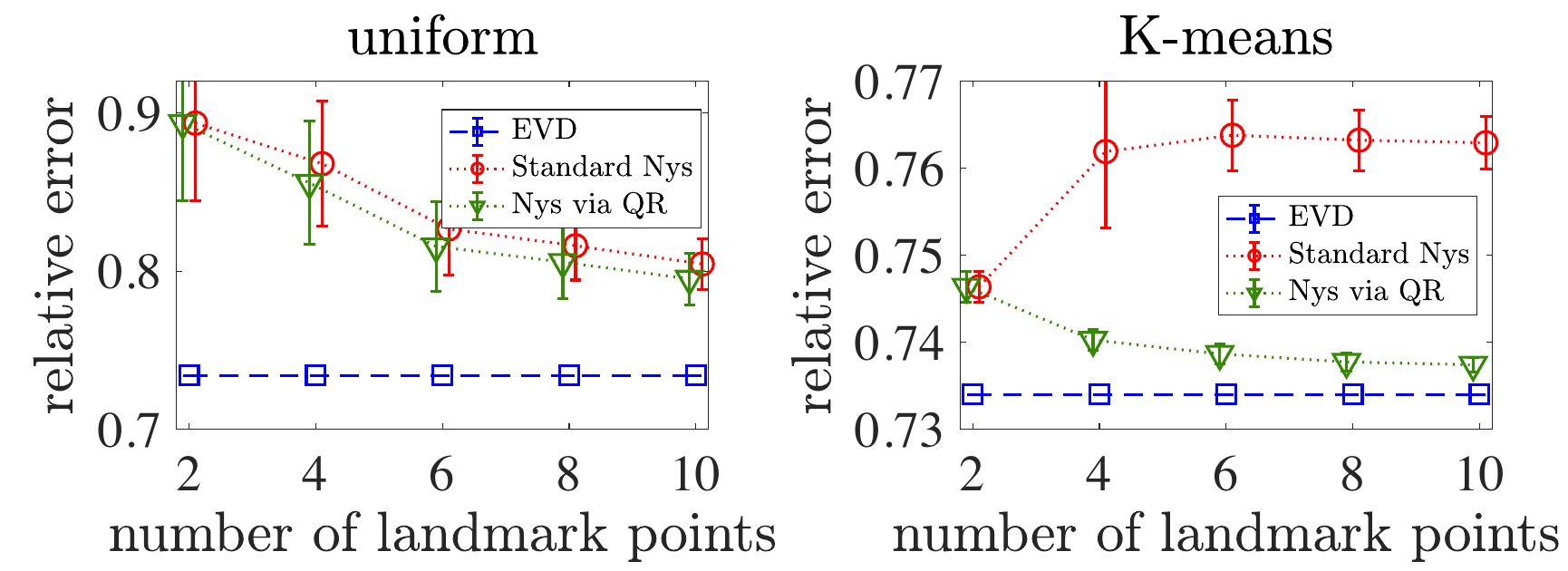}\label{fig:err_trace_3}
	}
	
	\subfloat[\dataset{E2006-tfidf}]{
		\includegraphics[width=0.75\textwidth]{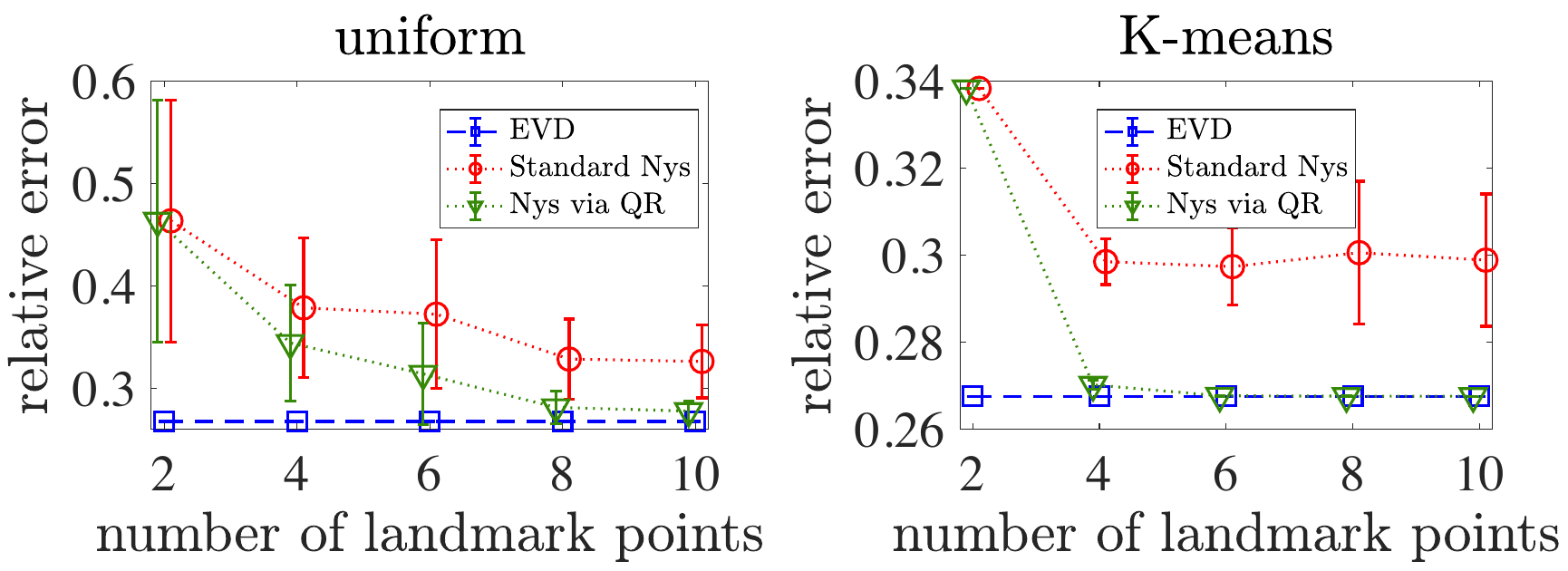}\label{fig:err_trace_4}
	}
	\caption{Mean and standard deviation of the relative error with respect to the trace norm.  For each case, the left plot uses in-sample landmark points while the right plot uses out-of-sample points.
	}
	\label{fig:err-trace}
\end{figure}

Figure \ref{fig:err-frob} is generated the same as Figure \ref{fig:err-trace} but the error is reported in the Frobenius norm. The pattern of behavior is very similar to that  in Figure \ref{fig:err-trace} even though we lack theoretical guarantees. In fact, neither method dominates the other for all kernel matrices (cf.~the adversarial example in Remark \ref{thm:remark-frob}), but in these practical data sets, the modified method always performs better. 

\begin{figure}[tbhp]
	\centering
	\subfloat[\dataset{pendigits}]{
		\includegraphics[width=0.75\textwidth]{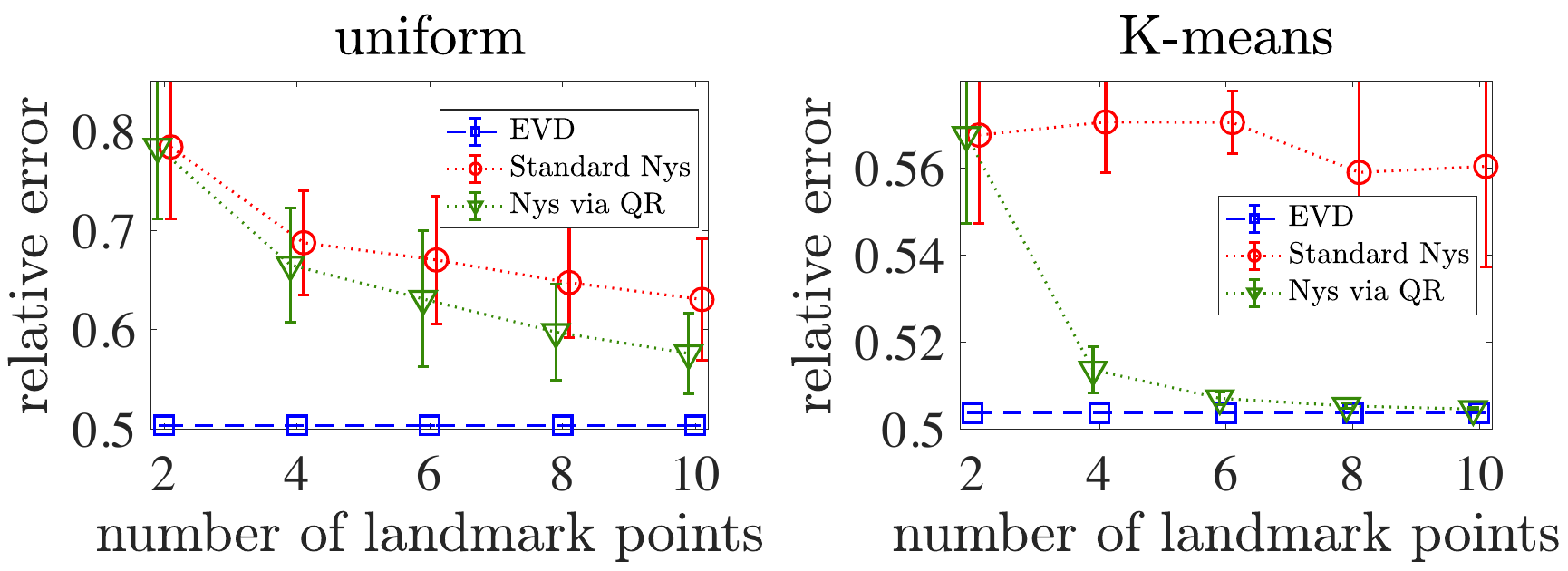}
		\label{fig:err_frob_1}
		
	}
	
	\subfloat[\dataset{satimage}]{
		\includegraphics[width=0.75\textwidth]{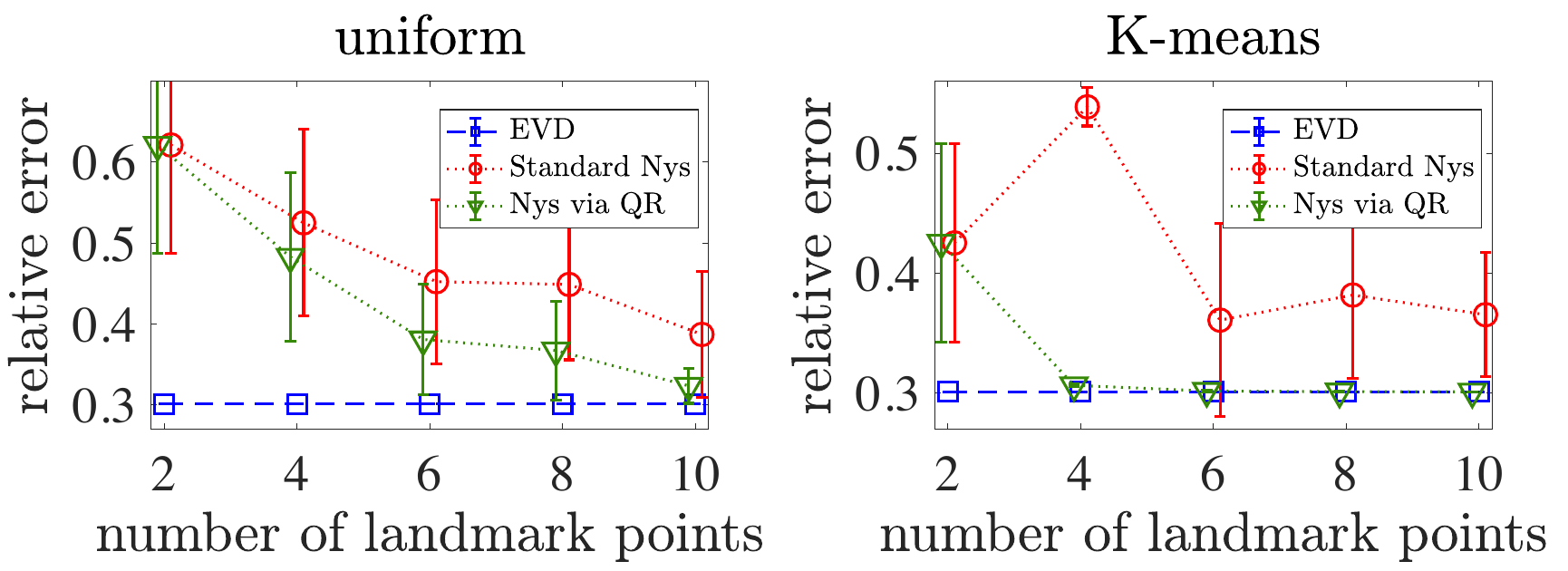}\label{fig:err_frob_2}
	}\\
	
	\centering
	\subfloat[\dataset{w6a}]{
		\includegraphics[width=0.75\textwidth]{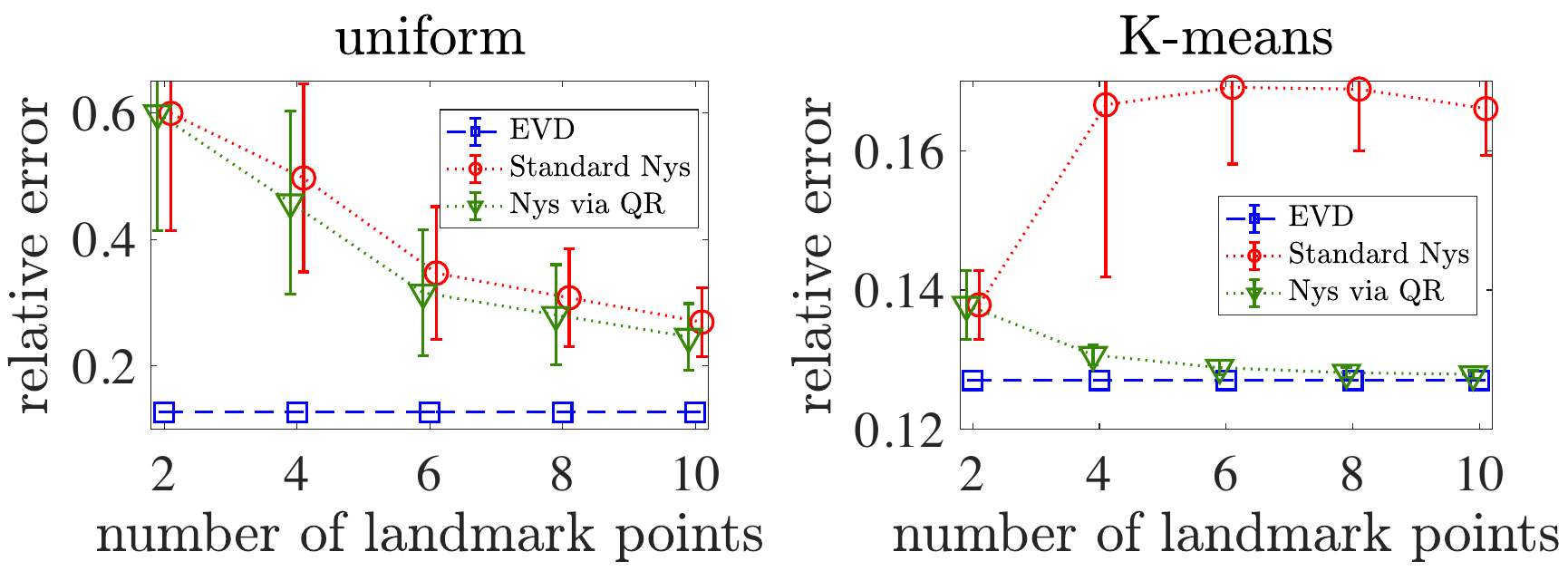}\label{fig:err_frob_3}
	}
	
	\subfloat[\dataset{E2006-tfidf}]{
		\includegraphics[width=0.75\textwidth]{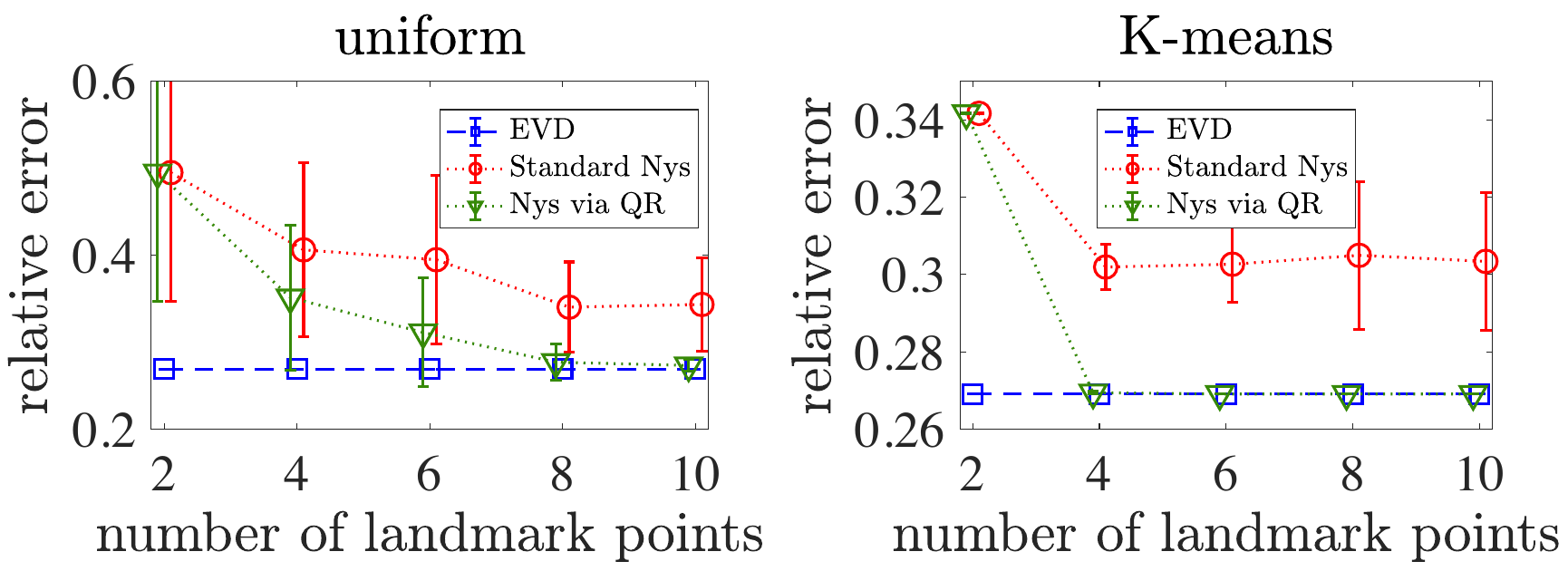}\label{fig:err_frob_4}
	}
	\caption{Mean and standard deviation of the relative error with respect to the Frobenius norm.}
	\label{fig:err-frob}
\end{figure} 

\begin{remark} \label{rmk:2}
	Remark \ref{rmk:1} showed that in both the trace and Frobenius norms, the standard Nystr\"om method can perform worse when we sample additional landmark points. Figure \ref{fig:err-trace} and Figure \ref{fig:err-frob} show that a similar effect happens with the standard Nystr\"om method when we use out-of-sample landmark points selected via K-means (in this case, as we increase $m$, we do not necessarily include the landmark points selected for smaller $m$). For example, according to Figure \ref{fig:err_trace_2} (right), the mean relative error of standard Nystr\"om is increased from $0.56$ to $0.61$ when we increase from $m=2$ to $m=4$ landmark points selected via K-means centroids.
\end{remark}
This counter-intuitive effect of decreased accuracy even with more landmark points (Remark \ref{rmk:1} and Remark \ref{rmk:2}) is due to the sub-optimal restriction procedure of standard Nystr\"om. Theorem \ref{thm:nys-qr-std-more} proves that the \emph{modified} Nystr\"om method does not suffer from the same effect in terms of the trace norm and in-sample landmark points, and Figure \ref{fig:err-trace} and Figure \ref{fig:err-frob} do not show any evidence of this effect even if we switch to the Frobenius norm or consider out-of-sample landmark points.

We also demonstrate the efficiency of Nystr\"om via QR decomposition by plotting the averaged running time on a logarithmic scale for \dataset{E2006-tfidf}. The running time results are omitted for the remaining data sets because the average running time was less than one second. Figure \ref{fig:runtime} shows that the dominant computational cost  is related to constructing $\CC$ and $\WW$ in step 1 of  both standard and modified Nystr\"om methods. Moreover, given these two matrices, the cost of finding the best rank-$r$ approximation in our proposed method is only slightly higher than the standard Nystr\"om method, as explained in Section \ref{sec:improved-nys}. Thus, the overall computational cost of modified method is almost identical to the standard Nystr\"om method and our method achieves superior performance as demonstrated in  Figures \ref{fig:err_trace_4} and \ref{fig:err_frob_4}.

\begin{figure}[tbhp]
	\centering
	\includegraphics[width=0.7\textwidth]{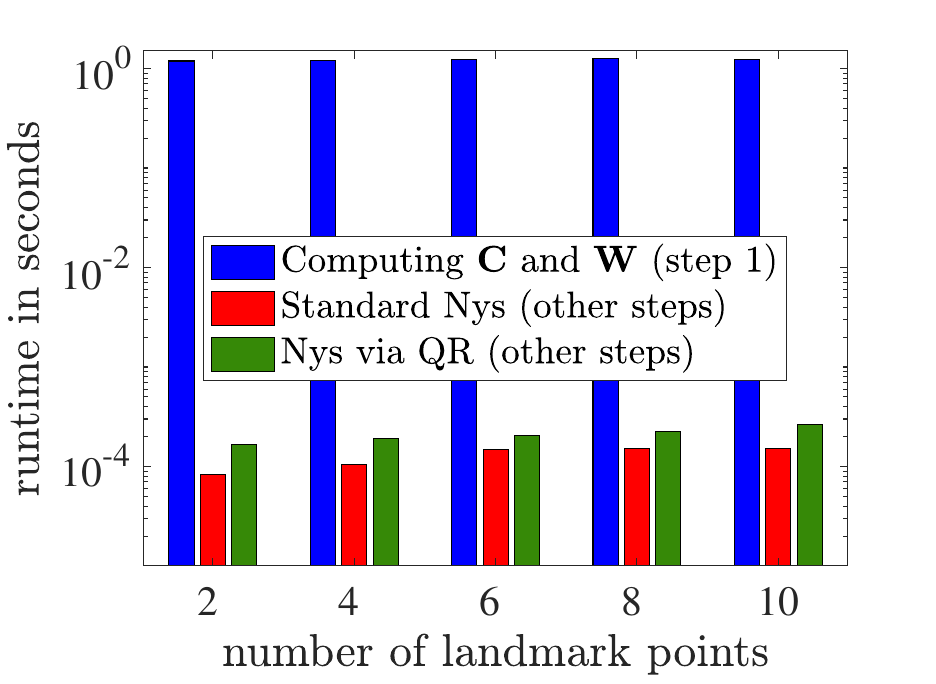}
	\caption{Running time results for standard Nystr\"om and the modified technique.}
	\label{fig:runtime}
\end{figure}

\subsection{Application to Kernel Clustering}
In the last experiment, we demonstrate the performance of modified Nystr\"om on a kernel K-means clustering task. We use the \dataset{segment} data set ($p=19$ and $n=2,\!310$)  from the LIBSVM archive that consists of $7$ clusters. We choose the homogeneous polynomial kernel function of order $2$, i.e., $\kappa(\x_i,\x_j)=\langle\x_i,\x_j\rangle^2$. Previously, we considered Gaussian kernel functions and we would like to show the performance of our method for polynomial kernels as well \cite{RFM_Gittens}.
In addition to the two landmark selection techniques used in Section \ref{sec:exp-1}, we consider another recent technique based  on  Determinantal  Point  Processes (DPP) that was introduced in \cite{li2016fast}. Figure \ref{fig:clustering-accuracy} reports the mean and standard deviation of normalized mutual information (NMI) \cite{oh2017deep} over $200$ trials for fixed rank $r=2$ and various number of landmark points, where the standard K-means algorithm is performed on the columns of $\LL^T\in\R^{r\times n}$ after computing the fixed-rank approximation $\K\approx\LL\LL^T$, cf.~\eqref{eq:low-rank-kernel}. NMI is a popular clustering quality metric which ranges from 0 to 1, and larger values of NMI indicate the higher quality of clustering.

As we see in Figure \ref{fig:clustering-accuracy}, the modified  method outperforms standard Nystr\"om when $m>r$, which is necessary for obtaining accuracies that are close to the baseline EVD. Furthermore, it is observed that modified Nystr\"om leads to higher accuracy results as the number of distinct landmark points increases. In particular, when landmark points are K-means centroids in Figure \ref{fig:acc_2}, the mean NMI of our method reaches EVD with very small standard deviation.
However, for all three landmark selection techniques, the standard Nystr\"om method does not necessarily provide better results when more landmark points are used.
\begin{figure}[tbhp]
	\centering
	\subfloat[uniform]{
		\includegraphics[width=0.5\textwidth]{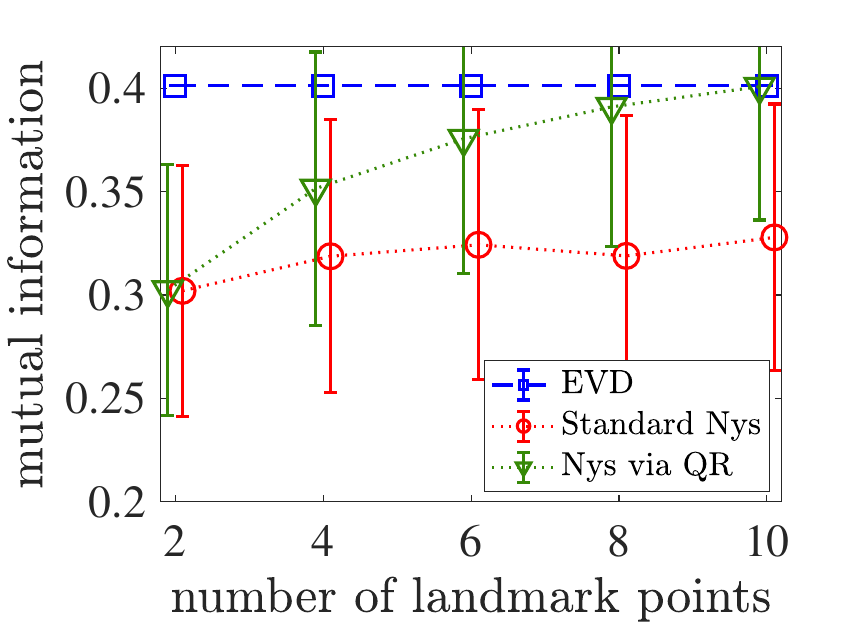}
		\label{fig:acc_1}
	}
	\subfloat[K-means]{
		\includegraphics[width=0.5\textwidth]{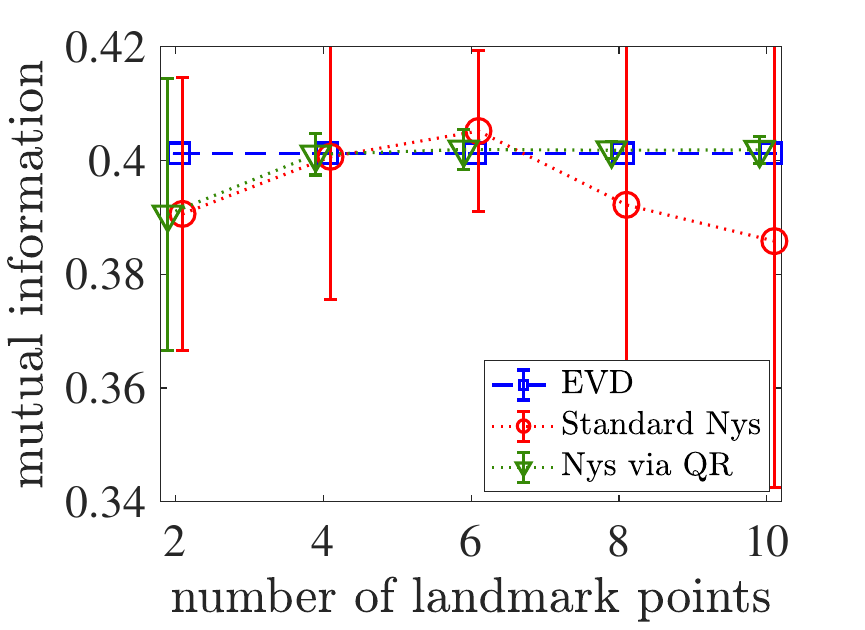}\label{fig:acc_2}
	}
	
	\subfloat[DPP]{
		\includegraphics[width=0.5\textwidth]{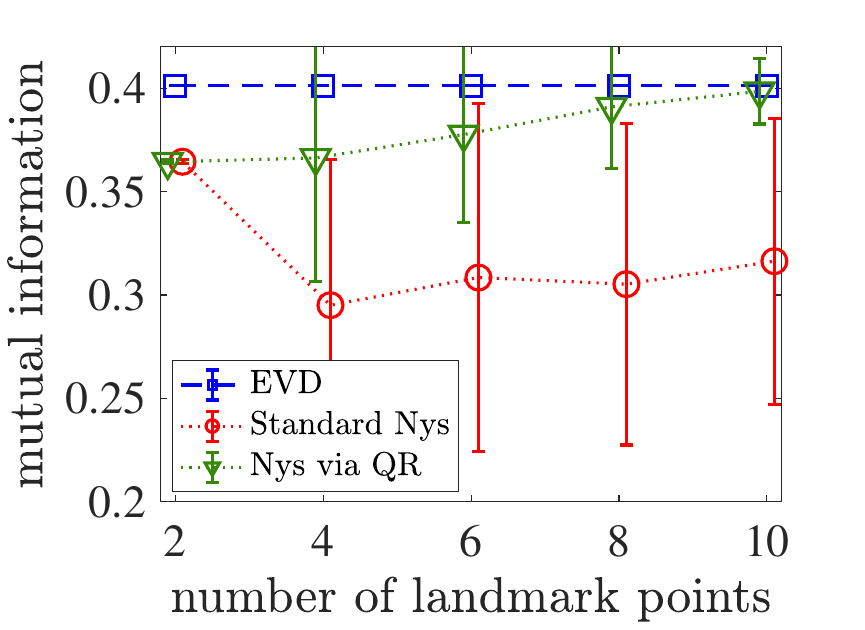}\label{fig:acc_3}
	}
	\caption{Normalized mutual information using three landmark selection techniques.
	}
	\label{fig:clustering-accuracy}
\end{figure}

To further illustrate the importance of modified method, we plot the columns of $\LL^T$ (known as virtual samples \cite{golts2016linearized}) that correspond to two of the $7$ clusters in Figure \ref{fig:clustering-visual}. In this case, $m=10$ landmark points are sampled uniformly at random from the input data. We see that the EVD and our modified method have almost identical results. However, the standard Nystr\"om method spreads out the virtual samples, which justifies its lower accuracy compared to the modified Nystr\"om method and the best rank-$2$ approximation obtained via EVD. 

\begin{figure}[tbhp]
	\centering
	\subfloat[EVD]{
		\includegraphics[width=0.5\textwidth]{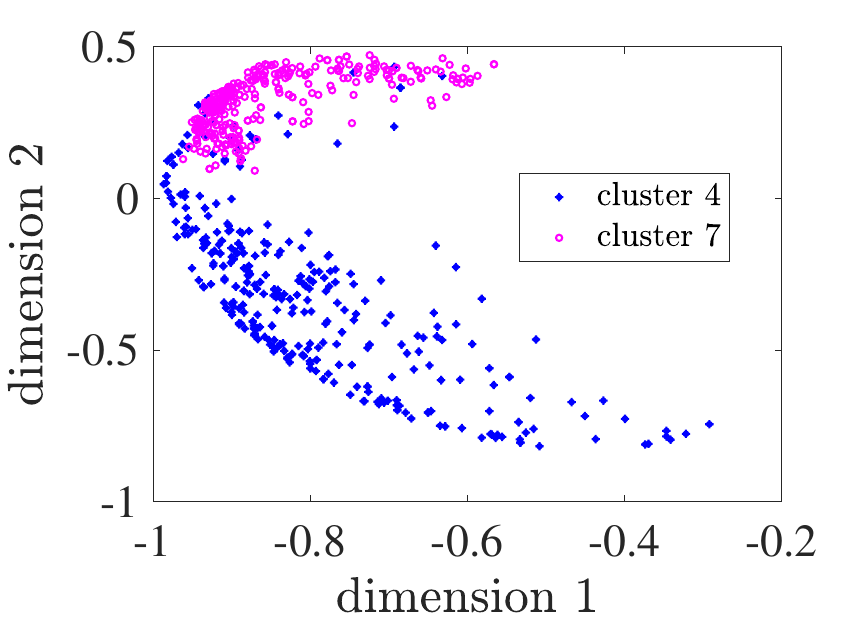}
	}
	\subfloat[Nystr\"om via QR]{
		\includegraphics[width=0.5\textwidth]{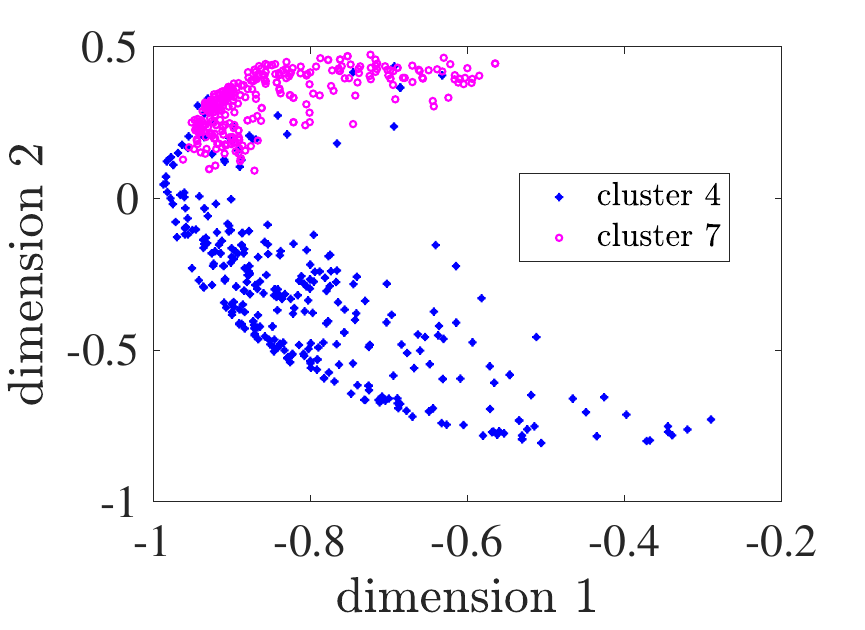}
	}
	
	\subfloat[Standard Nystr\"om]{
		\includegraphics[width=0.5\textwidth]{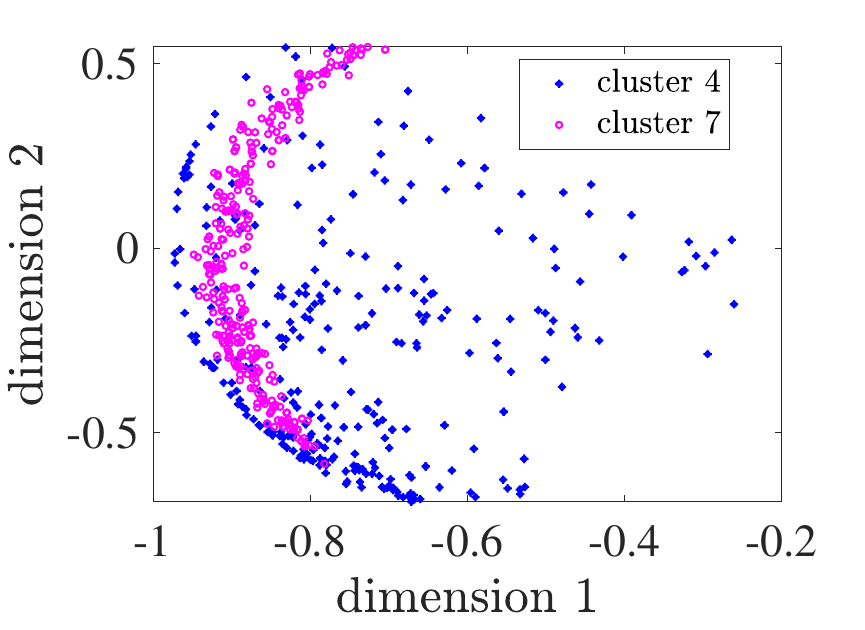}
	}

	\caption{Visualization of the columns of $\LL^T$ that correspond to clusters $4$ and $7$.
	}
	\label{fig:clustering-visual}
\end{figure}

\section{Conclusion}\label{sec:conclusion}
In this paper, we have presented a modified technique for the important process of rank reduction in the Nystr\"om method. Theoretical analysis shows that: (1) the modified method provides improved fixed-rank approximations compared to standard Nystr\"om with respect to the trace norm; and (2) the quality of fixed-rank approximations generated via the modified method improves as the number of distinct landmark points increases. Our theoretical results are accompanied by illustrative numerical experiments comparing the modified method with standard Nystr\"om. We also showed that the modified method has almost the same computational complexity as standard Nystr\"om, which makes it suitable for large-scale kernel machines.


\bibliographystyle{plain}
\bibliography{phd_farhad}

\end{document}